\documentclass[letterpaper,journal]{IEEEtran}
\usepackage{hyperref}
\usepackage{amsmath,amsfonts}
\usepackage{amsthm}
\usepackage{multirow}
\usepackage{adjustbox}
\usepackage{booktabs}
\usepackage{makecell}
\usepackage{algorithmic}
\usepackage{booktabs}
\usepackage{array}
\usepackage[table]{xcolor}
\usepackage[caption=false,font=normalsize]{subfig}
\usepackage{textcomp}
\usepackage[utf8]{inputenc}
\usepackage{amssymb}
\usepackage{stfloats}
\usepackage{url}
\usepackage{verbatim}
\usepackage{graphicx}
\usepackage{cite}
\usepackage{stackengine}
\usepackage{amsmath}
\usepackage{bbm}
\usepackage{mathtools}
\usepackage[ruled,vlined]{algorithm2e}
\usepackage{booktabs}
\usepackage{xcolor}
\usepackage{titlesec}
\titleformat{\subparagraph}[runin]{\normalfont\normalsize\bfseries}{\thesubparagraph}{1em}{}[.]
\usepackage{geometry}
\newif\ifrevisionblue
\revisionbluefalse
\newcommand{\rev}[1]{\ifrevisionblue\textcolor{blue}{#1}\else#1\fi}
\long\def\revised#1{\ifrevisionblue{\color{blue}#1}\else#1\fi}

\newtheorem{lemma}{Proposition }

\begin{document}
\bstctlcite{IEEEexample:BSTcontrol}

\title{DoRF++: Spherical Representation Learning over Doppler Radiance Fields for Robust Wi-Fi Sensing}

\author{Navid Hasanzadeh$^{1}$ and Shahrokh Valaee$^{1}$,~\IEEEmembership{Fellow,~IEEE}

\thanks{$^{1}$N. Hasanzadeh and S. Valaee are with the Department of Electrical \& Computer Engineering, University of Toronto, Toronto, ON, Canada. { navid.hasanzadeh@mail.utoronto.ca, valaee@ece.utoronto.ca.}}}

\maketitle

\begin{abstract}

Motivated by the IEEE 802.11bf effort to standardize advanced WLAN sensing, interest in Wi-Fi Channel State Information (CSI) for passive, device-free, and privacy-preserving activity and gesture recognition has grown rapidly. Recent studies have shown that Doppler velocity projections extracted from CSI, which directly reflect human-motion velocity, enable more robust human activity recognition (HAR) and stronger generalization across users and unseen conditions. Nevertheless, reliable generalization under real-world variability remains a major challenge, hindering the adoption of Wi-Fi sensing in real-world applications. To address this challenge, we introduce Doppler Radiance Fields (DoRF), bringing the concept of neural radiance fields (NeRF) from computer vision into Wi-Fi sensing. DoRF models Doppler velocity projections extracted from Wi-Fi CSI as sparse and diverse virtual-camera views of human motion. It then infers a latent 3D motion sequence whose projections along learned effective Doppler directions explain the CSI-derived Doppler observations. The recovered motion is subsequently projected onto an equiangular grid of directions on the unit sphere, producing a spherical representation of the underlying motion. Since DoRF naturally defines the Doppler representation on spheres, we further introduce DoRF++, a spherical-learning design that applies spherical Transformers for activity classification.
Experiments on our collected hand-gesture dataset show that DoRF++ significantly outperforms state-of-the-art Wi-Fi-based HAR methods in cross-user generalization accuracy, especially for difficult gestures in settings with a single multi-antenna receiver access point (AP).
\end{abstract}

\begin{IEEEkeywords}
Human activity recognition, Wi-Fi sensing, channel state information (CSI), Doppler velocity, radiance fields
\end{IEEEkeywords}

\section{Introduction}
\IEEEPARstart{H}{uman} sensing technologies are evolving with the rise of smart environments and virtual reality applications, allowing seamless detection of and response to user actions. Wi-Fi-based human activity recognition (HAR) stands out in this field due to its non-intrusive nature and broad accessibility~\cite{ahmad2024wifi}, particularly following the recent publication of the IEEE 802.11bf standard, which defines protocols for WLAN sensing in license-exempt bands with carrier-frequency operation from 1 to 7.125~GHz and enables compatibility with existing Wi-Fi devices~\cite{du2024overview}. Wi-Fi sensing overcomes the drawbacks of traditional methods, such as cameras that require a clear line-of-sight path and record visual data, raising privacy concerns, or wearables that require continuous use and regular battery charging~\cite{radwan2025tutorial}. This enables Wi-Fi devices to support indoor tracking in homes, offices, and hospitals, advancing interactive and immersive applications.

\begin{figure}[!t]
	\centering

	\subfloat[]{%
		\includegraphics[width=0.7\linewidth]{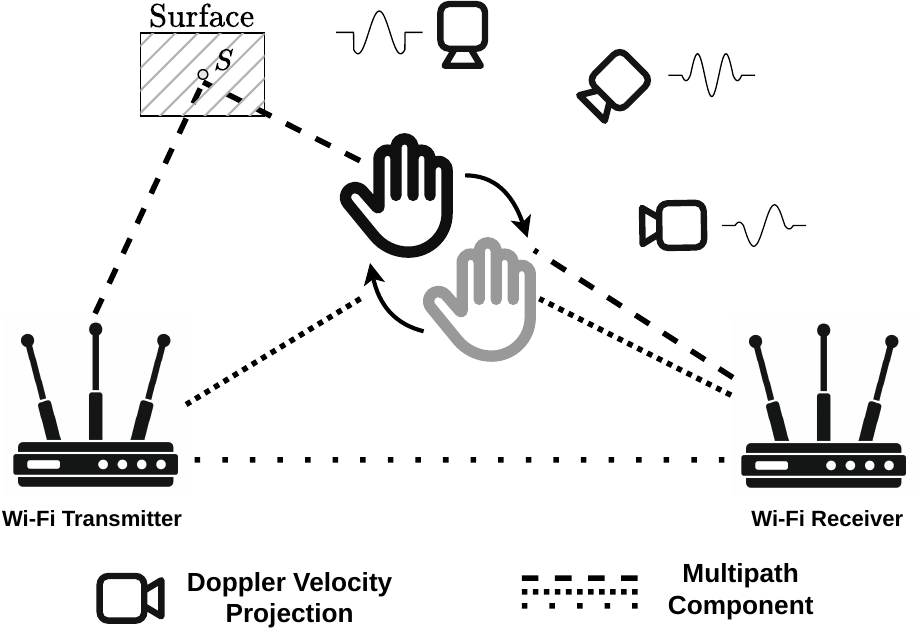}
		\label{figure:handcamera}
	}

	\vspace{0.5em}

	\subfloat[]{%
		\includegraphics[width=0.7\linewidth]{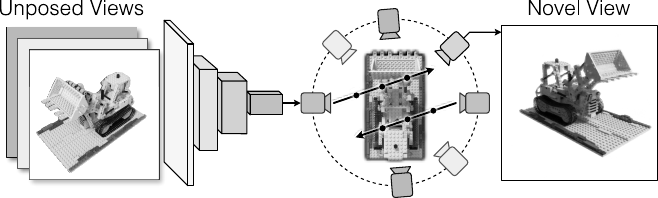}
		\label{figure:melon}
	}

\caption{
Comparison between the proposed DoRF for Wi-Fi sensing and NeRF in computer vision.
(a) DoRF represents an activity using multipath Doppler velocity projections from Wi-Fi CSI, capturing motion observations from multiple perspectives within the 3D environment.
(b) Unposed NeRF reconstructs a 3D volume from 2D images with unknown camera poses~\cite{levy2023melon}.
}
	\label{figure:dorf_nerf_comparison}
\end{figure}

Wi-Fi-based HAR infers human motion by analyzing how movement perturbs the wireless channel, observed through complex-valued Channel State Information (CSI) that reflects signal changes caused by reflections and scattering. Prior work typically relies on either CSI magnitude or CSI phase~\cite{zandi2026beyond, yousefi2017survey, djogoHAR, jang2025study, zhang2024csi, peng2023rosefi}, often using feature extraction followed by machine learning, but these pipelines are usually reliable only in highly controlled and static settings. Magnitude-based methods can achieve high accuracy under fixed conditions yet degrade substantially when the environment or user changes, while phase-based approaches are further challenged by phase wrapping, hardware impairments, and multipath effects. Despite extensive processing techniques, magnitude- and phase-based methods often generalize poorly to unseen users, locations, or devices, sometimes falling to chance-level accuracy.

An alternative line of work focuses on Doppler velocity extracted from Wi-Fi CSI, exploiting motion-induced frequency shifts to emphasize dynamic components while attenuating static structures. Several methods formulate velocity estimation as an angle-of-arrival problem, modeling the moving body as a single point source to recover directional motion cues~\cite{meneghello2022sharp, chen2022afall, zhang2021widar3}. However, these approaches often compress multipath-rich observations at each access point (AP) into a single velocity time series by aggregating energy across propagation paths. Because the underlying motion is three-dimensional and multipath interference can suppress activity-related signatures, this summary may be insufficient to distinguish activities with similar aggregate Doppler responses, motivating methods that separate and exploit multipath contributions.

To move beyond single-estimate aggregation, the recent method MORIC~\cite{hasanzadeh2025moric} decomposes Wi-Fi CSI into delay-separated components and extracts a Doppler velocity projection from each one. These projections can be interpreted as observations of the same human motion from different unknown directions, analogous to one-dimensional virtual cameras induced by random multipath reflections and offering varying levels of clarity. MORIC treats them as an unordered and potentially repeated set of motion descriptors and aggregates them using an order- and repetition-invariant classifier, substantially improving cross-user generalization in Wi-Fi-based HAR over methods that rely on a single aggregated Doppler estimate.

However, MORIC does not explicitly model the geometry of these virtual-camera views. The extracted Doppler projections are interpreted as separate motion descriptors, without estimating their viewing directions or organizing them into a coherent spatial representation. This is analogous to having many pieces of a puzzle but interpreting each piece independently, without assembling them to see the complete scene they represent. As a result, although MORIC captures richer Doppler information than single-estimate methods, it does not explicitly reconstruct a unified multi-view representation of the underlying 3D motion. Moreover, Wi-Fi reflections are inherently random and strongly environment-dependent. Therefore, the effective virtual-camera viewpoints captured by each AP are sparse, limited, and may vary across trials and environments. Consequently, HAR generalization can still degrade when test data contains motion views not well represented during training. These limitations motivate the need for a geometry-aware and more comprehensive motion representation that is robust to Wi-Fi channel randomness and environmental reflections.

This work introduces \textbf{Do}ppler \textbf{R}adiance \textbf{F}ields (DoRF) to construct a geometry-aware representation of Doppler observations\footnote{Preliminary versions of the proposed Doppler Radiance Fields (DoRF) framework were accepted at \textit{IEEE CAMSAP~2025} and \textit{IEEE ICASSP~2026}~\cite{hasanzadeh2025dorf,hasanzadeh2025doppler}.}. Building on MORIC’s extraction of multiple Doppler velocity projections, DoRF moves beyond treating them as independent, unordered descriptors. Instead, it models them as sparse virtual-camera observations of a shared latent three-dimensional motion sequence and recovers a unified motion representation that jointly explains the observed projections.

Inspired by neural radiance fields (NeRF)~\cite{mildenhall2021nerf}, which learn a continuous three-dimensional scene representation from two-dimensional images, and unposed NeRF~\cite{levy2023melon}, which extends this idea to settings with unknown camera poses, DoRF learns a latent three-dimensional motion sequence whose projections through learned effective Doppler directions explain the one-dimensional Doppler velocity signals extracted from Wi-Fi CSI. The recovered motion is then projected onto an equiangular grid of directions on the unit sphere, producing a comprehensive, geometry-aware Doppler radiance field with a consistent set of virtual viewpoints. Fig.~\ref{figure:dorf_nerf_comparison} illustrates the concept underlying the proposed Wi-Fi-based human motion recognition approach, inspired by NeRF in computer vision.

The main contributions of this work are as follows:
\begin{itemize}
\item This work proposes a common-RX CSI phase sanitization method for multi-antenna Wi-Fi systems. By forming ratios between different transmit streams received at the same antenna, it more effectively suppresses receiver-side STO, SFO, and RF-chain mismatch, yielding cleaner phase dynamics for Doppler extraction.
\item This paper introduces DoRF, a novel representation of human motion constructed from multiple one-dimensional Doppler velocity projections extracted from Wi-Fi CSI. Treating each projection as a virtual camera induced by sparse multipath reflections, DoRF jointly recovers a unified latent three-dimensional motion sequence and synthesizes a spherical multi-view description, providing a more complete and robust view of motion under environmental and multipath variability.
\item To further enhance classification performance and generalization, this paper introduces DoRF++, an extension that processes the synthesized DoRF directly in its spherical form. By applying spherical Transformers tailored for equivariant learning on the sphere, DoRF++ achieves more effective activity classification by leveraging the uniform coverage and rotational symmetries of the radiance field.
\item This work evaluates the proposed DoRF++ experimentally on our collected challenging hand-gesture dataset. The results demonstrate substantial improvements in cross-user generalization accuracy over state-of-the-art Wi-Fi-based HAR methods using only a single receiver AP, with particularly strong gains on complex gestures.

\end{itemize}

\section{Background}\label{section:background}
\subsection{Wi-Fi Channel State Information}

\noindent
In IEEE~802.11a/g/n/ac systems, each Wi-Fi frame contains a preamble with known training symbols called the \textit{Long Training Field} (LTF). Since these symbols are known in advance at the receiver and span the active OFDM subcarriers, they are used to estimate the wireless channel on each subcarrier. This per-subcarrier channel estimate is commonly referred to as CSI, which characterizes how the propagation channel affects the transmitted signal. For a transmitter with $A_t$ antennas and a receiver with $A_r$ antennas, the received signal vector at subcarrier frequency $f_c$ can be written as
\begin{equation}
\mathbf{x} = \mathbf{H}_{f_c}\mathbf{s} + \mathbf{n},
\end{equation}
where $\mathbf{x}\in\mathbb{C}^{A_r}$ is the received signal vector, $\mathbf{s}\in\mathbb{C}^{A_t}$ is the transmitted signal vector, $\mathbf{H}_{f_c}\in\mathbb{C}^{A_r\times A_t}$ is the channel matrix at subcarrier $f_c$, and $\mathbf{n}$ denotes additive noise.

To estimate the channel, the receiver uses multiple known pilot transmissions from the LTF. Let $\mathbf{P}\in\mathbb{C}^{A_t\times N_p}$ denote the pilot matrix, whose columns are the $N_p$ known transmitted pilot vectors across the transmit antennas, and let $\mathbf{X}_\mathbf{P}\in\mathbb{C}^{A_r\times N_p}$ denote the corresponding received pilot matrix. Then, the pilot observations satisfy
\begin{equation}
\mathbf{X}_\mathbf{P} = \mathbf{H}_{f_c}\mathbf{P} + \mathbf{N},
\end{equation}
where $\mathbf{N}\in\mathbb{C}^{A_r\times N_p}$ is the noise matrix. Based on these known pilot symbols and their received versions, a Least Squares (LS) estimate of the CSI is obtained as
\begin{equation}
\mathbf{H}_{f_c}^{\mathrm{LS}} = \mathbf{X}_\mathbf{P}\mathbf{P}^H(\mathbf{P}\mathbf{P}^H)^{-1},
\end{equation}
where $(\cdot)^H$ denotes the Hermitian transpose.

CSI is inherently shaped by multipath propagation, where reflections, scattering, and diffraction create multiple paths between transmit antenna $m$ and receive antenna $n$. A common model expresses the channel as a sum over $L$ paths:
\begin{equation}
\label{eq:CSI_multipath}
H_{m,n}^{f_c}(s) = \sum_{l=1}^L \beta_{l}(s)\,e^{-j2\pi d_{m,n,l}(s)\,f_c /c},
\end{equation}
where $H_{m,n}^{f_c}(s)$ is the CSI at time $s$, $\beta_l(s)\in\mathbb{C}$ is the complex gain of path $l$, $d_{m,n,l}(s)$ is its propagation distance, $c$ is the speed of light, and the exponential captures the phase shift due to propagation. Environmental changes or human motion perturb the multipath components, causing $\mathbf{H}_{f_c}$ to vary over time and enabling Wi-Fi sensing applications such as HAR.

\section{Method} \label{section:method}
This section presents the proposed Wi-Fi-based HAR framework. We first introduce a common-RX CSI phase sanitization method to suppress phase noise and reveal motion-relevant patterns. We then describe MUSIC-based Doppler extraction and model the resulting projections as sparse virtual views of latent 3D motion. Finally, we present DoRF and the DoRF++ spherical learning model for robust activity classification.
\subsection{CSI Phase Sanitization via Common-RX Stream Ratio}

In real-world Wi-Fi deployments, CSI measurements are affected by multiple impairments arising from baseband processing, synchronization errors, and hardware and software nonidealities~\cite{ma2019wifi}. Accounting for these effects, the measured CSI, \(\hat{H}_{m,n}^{f_c}(s)\), at time \(s\) between transmit antenna \(m\) (TX) and receive antenna \(n\) (RX) can be expressed as
\begin{equation}
\begin{multlined}
\hat{H}_{m,n}^{f_c}(s)
=
\underbrace{\sum_{l=1}^{L}
\beta_l(s)\,
e^{-j2\pi d_{m,n,l}(s)f_c/c}}_
{\substack{\text{Multipath}\\\text{channel}}}
\,
\underbrace{e^{-j2\pi \tau_m(s)f_c}}_
{\substack{\text{Cyclic shift}\\\text{diversity}}}
\times
\\
\underbrace{e^{-j2\pi \rho_n(s)f_c}}_
{\substack{\text{Sampling time}\\\text{offset}}}
\,
\underbrace{
e^{-j2\pi \eta_n(s)
\left(\frac{f_c^\prime}{f_c}-1\right)f_c}}_
{\substack{\text{Sampling frequency}\\\text{offset}}}
\,
\underbrace{
q_{m,n}(s)\,
e^{-j2\pi \zeta_{m,n}(s)}}_
{\text{Beamforming}},
\label{eq:CSI_channel}
\end{multlined}
\end{equation}
where \(\tau_m(s)\) denotes the cyclic shift diversity delay applied at transmit antenna \(m\), \(\rho_n(s)\) represents the sampling time offset (STO) associated with receive RF chain \(n\), and \(\eta_n(s)\) denotes its sampling frequency offset (SFO), with \(f_c^\prime\) being the actual subcarrier frequency. Finally, \(q_{m,n}(s)\) and \(\zeta_{m,n}(s)\) capture the amplitude attenuation and phase shift introduced by beamforming, respectively.

To distinguish synchronization errors shared across the receiver from residual chain-dependent distortions, the STO and SFO terms can be decomposed as
\[
\rho_n(s)=\rho_{\mathrm{com}}(s)+\Delta\rho_n(s),
\qquad
\eta_n(s)=\eta_{\mathrm{com}}(s)+\Delta\eta_n(s),
\]
where \(\rho_{\mathrm{com}}(s)\) and \(\eta_{\mathrm{com}}(s)\) denote the receiver-wide components, while \(\Delta\rho_n(s)\) and \(\Delta\eta_n(s)\) represent residual offsets specific to receive RF chain \(n\). This formulation accounts for practical RF-chain mismatch, imperfect calibration, and antenna-dependent timing or frequency offsets.

\begin{figure}[!t]
	\centering
	\subfloat{%
		\includegraphics[width=0.8\linewidth]{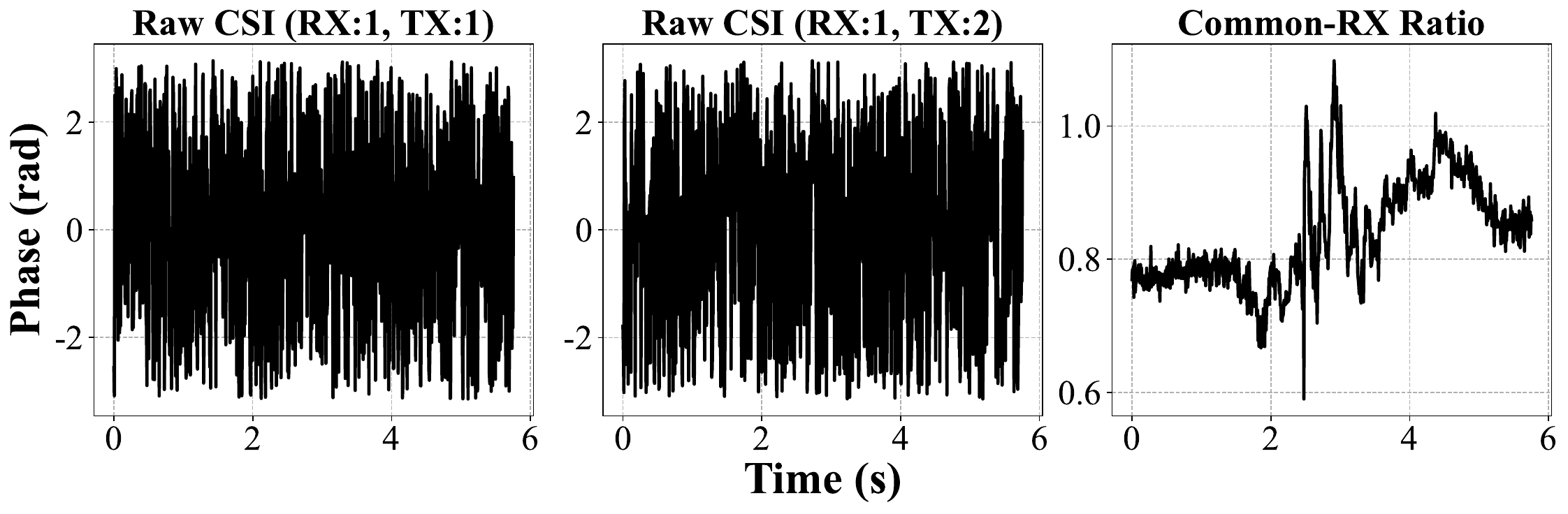}
	}
	\caption{Proposed CSI phase sanitization based on the ratio of common-RX, different-TX streams. The method effectively suppresses phase noise while revealing activity-related temporal patterns. The example shows one subcarrier measured during a circular hand gesture.}
\label{figure:noise}
\end{figure}

Among these impairments, STO and SFO have the most severe effect on CSI phase. Because Doppler information is encoded in fine-grained phase dynamics, these offsets can overshadow the underlying motion-induced variations, whereas other imperfections typically contribute comparatively minor temporal perturbations. Under such conditions, simple temporal averaging or low-pass filtering is insufficient and may even degrade the sensing signal, particularly in the presence of abrupt phase jumps, hardware-induced discontinuities, or bursty interference.

A common approach in prior work is CSI-ratio processing, in which two TX--RX streams sharing the same transmit antenna but using different receive antennas are divided to suppress common phase distortions~\cite{wu2022wifi}. Although this construction can remove synchronization errors shared by the two streams, it implicitly assumes that the corresponding receive RF chains are well matched. In practical multi-chain receivers, however, two receive antennas are not guaranteed to experience identical STO/SFO-induced phase distortions because of RF-chain mismatch, imperfect calibration, and antenna-dependent offsets. In terms of the decomposition above, a conventional different-RX ratio cancels the common components \(\rho_{\mathrm{com}}(s)\) and \(\eta_{\mathrm{com}}(s)\), but retains differential residual terms determined by \(\Delta\rho_{n_1}(s)-\Delta\rho_{n_2}(s)\) and \(\Delta\eta_{n_1}(s)-\Delta\eta_{n_2}(s)\). Consequently, the ratio may not only fail to fully suppress phase noise but may also introduce additional residual phase errors. Such errors are particularly destructive to Doppler sensing, for which motion information is encoded in fine-grained temporal phase variations~\cite{102859317}.

To address these limitations, this work introduces a common-RX CSI ratio for systems with multi-antenna Wi-Fi transmitters, a configuration widely available in practical deployments. Unlike conventional CSI-ratio methods, which typically form ratios between streams associated with the same transmit antenna, the proposed ratio \(H_{m_1,m_2,n}^{f_c}(s)\) is constructed from two streams received by the same antenna \(n\) but transmitted from two distinct antennas:
\begin{equation}
H_{m_1,m_2,n}^{f_c}(s)
=
\frac{\hat{H}_{m_1,n}^{f_c}(s)}
{\hat{H}_{m_2,n}^{f_c}(s)},
\label{eq:csi_ratio_proposed}
\end{equation}
where both streams are measured through the same receive RF chain. Consequently, they experience the same receiver-wide and chain-dependent STO and SFO distortions. These common multiplicative phase terms are canceled when the ratio is formed. Therefore, unlike a different-RX ratio, the common-RX construction does not retain differential receiver-chain synchronization errors and more effectively reduces residual phase distortion. As illustrated in Fig.~\ref{figure:noise}, the proposed common-RX CSI ratio suppresses dominant phase noise while preserving the fine-grained temporal structure associated with human activity.

\subsection{Effect of Motion on Wi-Fi CSI}

Indoor Wi-Fi propagation produces multiple concurrent paths between a transmitter and a receiver, including a line-of-sight component and numerous reflections from surrounding surfaces. The diverse reflections present in indoor environments allow the underlying velocity \(\mathbf{v}\) to be observed from multiple effective directions. When the contributions associated with a path are concentrated around a dominant direction, the resulting Doppler response can be approximated as a linear projection of \(\mathbf{v}\) onto that direction. When the reflections are distributed across several directions or form a multimodal propagation structure, the response instead represents a mixture of multiple motion projections. Although this observation may appear blurred or ambiguous when interpreted as a single projection, it still contains valuable information about the underlying 3D motion. This behavior is analogous to a motion-blurred image that combines information from multiple viewpoints or moving structures rather than depicting only one sharply defined view. From the receiver's perspective, each resolvable path, or unresolved group of paths, can therefore be interpreted as a virtual one-dimensional camera that observes the same 3D motion from a different effective direction and with a path-dependent degree of projection clarity. Collectively, these complementary observations provide a rich representation from which the underlying motion can be inferred. A detailed mathematical derivation of this projection model and its underlying assumptions is provided in MORIC~\cite{hasanzadeh2025moric}.

For each effective propagation component \(p\) between the Wi-Fi transmitter and receiver, representing either a resolvable path or an unresolved group of paths, the motion-induced path-length change can be approximated by projecting the true three-dimensional velocity \(\mathbf{v}\) onto the component's effective observation direction. Let \(\bar{\mathbf{r}}_p\in\mathbb{R}^3\) denote the corresponding unit observation direction, let \(g_p\geq 0\) denote its effective projection gain, and define \(\mathbf{r}_p=g_p\bar{\mathbf{r}}_p\). The gain \(g_p\) captures the influence of the propagation geometry, multipath visibility, and the clarity of the corresponding Doppler observation. Let \(\theta_p\) denote the angle between \(\mathbf{v}\) and \(\bar{\mathbf{r}}_p\). Over a sufficiently short time interval \(t\), during which the velocity, effective observation geometry, and path gain can be regarded as approximately constant, the resulting path-length variation is
\begin{equation}
\Delta L_p
\approx
\mathbf{v}^{\top}\mathbf{r}_p\,t
=
g_p\|\mathbf{v}\|\,t\,\cos\theta_p
=
v_p t,
\label{eq:total_delay_eq}
\end{equation}
where
\begin{equation}
v_p
=
\mathbf{v}^{\top}\mathbf{r}_p
=
g_p\|\mathbf{v}\|\cos\theta_p
\end{equation}
is the Doppler velocity projection associated with component \(p\). For unresolved groups containing reflections from several directions, this effective projection should be understood as an approximation, with deviations absorbed into the modeling residual. Thus, each effective propagation component provides a distinct one-dimensional observation of the same underlying 3D velocity, with its magnitude and sign determined by the effective projection gain and the alignment between the motion and the observation direction. Components that are more closely aligned with the motion generally produce stronger Doppler projections, whereas nearly orthogonal components produce weaker projections.

The corresponding delay variation is
\begin{equation}
\Delta \tau_p=\frac{\Delta L_p}{c},
\end{equation}
which induces a phase rotation at carrier frequency \(f_c\). Accordingly, the channel contribution associated with component \(p\), between transmit antenna \(m\) and receive antenna \(n\), evolves approximately as
\begin{equation}
{H}_{p,m,n}^{f_c}(s+t)
\approx
{H}_{p,m,n}^{f_c}(s)\,
e^{-j2\pi f_c \Delta \tau_p}.
\label{eq:channel_evolution}
\end{equation}
Since the measured CSI is the coherent superposition of multiple propagation components, the complete channel can be expressed as
\begin{equation}
{H}_{m,n}^{f_c}(s+t)
\approx
\sum_p
{H}_{p,m,n}^{f_c}(s)\,
e^{-j2\pi f_c \Delta \tau_p}.
\end{equation}
Different components may therefore exhibit different temporal phase-rotation rates, corresponding to different Doppler projections of the same underlying velocity. These complementary projections can be exploited for motion tracking and sensing applications such as human activity and gesture recognition.

\subsection{MUSIC-Based Doppler Extraction from Wi-Fi CSI}

Wi-Fi CSI is inherently multipath. Each OFDM subcarrier captures a superposition of propagation components arising from reflection, scattering, and diffraction, resulting in a complex channel composed of paths with different delays and gains. This structure naturally motivates delay--Doppler decomposition, which separates CSI by propagation delay and extracts motion-induced Doppler components within each delay bin, yielding a rich set of observations that can comprehensively describe the underlying motion and human activity dynamics.

Delay--Doppler decomposition, as used in prior work such as MORIC, extracts Doppler information independently from each delay bin. However, this approach has two key limitations. First, some of the delay bins may be very noisy, leading to unstable Doppler estimates and many outlier projections. Second, at higher carrier frequencies such as 5~GHz, the method becomes significantly more sensitive to residual synchronization errors and hardware impairments. This is because the phase distortion caused by a given timing error scales with \(f_c\), so small timing offsets and hardware imperfections can introduce larger phase errors, which propagate into the Doppler estimates and degrade their reliability. As a result, delay-wise Doppler extraction tends to be noisy and less stable at 5~GHz compared to 2.4~GHz.

Rather than relying on delay-wise Doppler estimation, this work assumes the availability of Wi-Fi transmitters equipped with multiple antennas and estimates Doppler independently from each common-RX, different-TX CSI-ratio stream by analyzing its temporal variation in the time domain using the MUSIC algorithm~\cite{schmidt1986multiple}. 
Although the number of resulting Doppler observations is smaller than in MORIC, the availability of multiple transmit-antenna pairs still provides a descriptive set of Doppler projections. In contrast to MORIC, each projection is estimated from the temporal evolution of the complete CSI-ratio stream rather than independently from a potentially noisy delay bin. This leads to a more robust representation of human motion and improved resilience to noise.

The CSI ratio of two multipath channels is not, in general, exactly represented by a single physical Doppler component. Instead, its temporal phase evolution reflects the relative motion-induced variations of the two constituent transmit streams. Over a sufficiently short time interval, however, this temporal evolution can be locally approximated as a superposition of effective Doppler components. Under a linear-projection interpretation, the differential response of the two transmit streams can be written as
\begin{equation}
v_{m_1,m_2,n}(s)
\approx
\mathbf{v}(s)^\top
\left(
\mathbf{r}_{m_1,n}
-
\mathbf{r}_{m_2,n}
\right)
+
\varepsilon_{m_1,m_2,n}(s),
\end{equation}
where \(\mathbf{r}_{m_1,n}\) and \(\mathbf{r}_{m_2,n}\) denote the unknown effective Doppler vectors associated with the two transmit streams, and \(\varepsilon_{m_1,m_2,n}(s)\) captures noise and modeling residuals. Defining
\begin{equation}
\widetilde{\mathbf{r}}_{m_1,m_2,n}
=
\mathbf{r}_{m_1,n}
-
\mathbf{r}_{m_2,n},
\end{equation}
the ratio stream can be interpreted as an observation of the underlying motion from a single unknown effective direction:
\begin{equation}
v_{m_1,m_2,n}(s)
\approx
\mathbf{v}(s)^\top
\widetilde{\mathbf{r}}_{m_1,m_2,n}
+
\varepsilon_{m_1,m_2,n}(s).
\end{equation}
Since the constituent directions are unknown and induced by the multipath environment, DoRF does not require them to be recovered or interpreted individually. Instead, their differential effect is absorbed into a single stream-specific effective Doppler vector. In the remainder of this work, each CSI-ratio stream is therefore treated as an observation from an unknown, environment-dependent effective direction.

Specifically, each CSI-ratio stream is divided into overlapping fixed-size windows, and one effective Doppler velocity is estimated from each window. Let \(i=(m_1,m_2,n)\) index a common-RX, different-TX CSI-ratio stream. The samples within a window of length \(W\) starting at time \(s\) are stacked as
\begin{equation}
\mathbf{h}_{i}(s)=
\begin{bmatrix}
H_{i}^{f_c}(s) \\
H_{i}^{f_c}(s+\Delta t) \\
\vdots \\
H_{i}^{f_c}(s+(W-1)\Delta t)
\end{bmatrix},
\end{equation}
where \(H_i^{f_c}(s)\) denotes the CSI ratio in~\ref{eq:csi_ratio_proposed}, and \(\Delta t\) denotes the CSI sampling interval. Over each window, the motion and effective propagation geometry are assumed to vary sufficiently slowly such that the Doppler frequencies can be regarded as approximately constant.

Assuming \(L\) dominant effective motion components with distinct Doppler frequencies, the signal can be locally approximated as
\begin{equation}
\mathbf{h}_{i}(s)
\approx
\sum_{l=1}^{L}
\alpha_{i,l}\,\mathbf{d}(f_{i,l})
+
\mathbf{w}_{i},
\label{eq:effective_ratio_doppler_model}
\end{equation}
where \(\alpha_{i,l}\) is the complex coefficient of the \(l\)-th effective component, \(\mathbf{w}_{i}\) denotes noise and modeling residuals, and the Doppler steering vector is given by
\begin{equation}
\mathbf{d}(f)=
\begin{bmatrix}
1 \\
e^{j2\pi f \Delta t} \\
\vdots \\
e^{j2\pi f (W-1)\Delta t}
\end{bmatrix}.
\end{equation}

For each CSI-ratio stream, the covariance matrix is estimated using
multiple snapshots obtained across the available OFDM subcarriers.
Specifically, within each temporal window of length \(W\), the samples
from subcarrier \(k\) are arranged into a temporal snapshot
\begin{equation}
\mathbf{h}_{i,k}(s)
=
\begin{bmatrix}
H_{i,k}(s) \\
H_{i,k}(s+\Delta t) \\
\vdots \\
H_{i,k}(s+(W-1)\Delta t)
\end{bmatrix}
\in\mathbb{C}^{W}.
\end{equation}
Using the \(N_{\rm sc}\) available subcarriers as snapshots, the
covariance matrix is estimated as
\begin{equation}
\mathbf{R}_{i}(s)
=
\frac{1}{N_{\rm sc}}
\sum_{k=1}^{N_{\rm sc}}
\mathbf{h}_{i,k}(s)\mathbf{h}_{i,k}^{H}(s)
\in\mathbb{C}^{W\times W}.
\end{equation}

Since the occupied bandwidth is small relative to the carrier frequency, the motion-induced Doppler frequencies are assumed approximately common across OFDM subcarriers within each temporal window, while subcarrier-dependent propagation effects are absorbed into their complex coefficients. The MUSIC pseudo-spectrum is then computed as
\begin{equation}
P_{\mathrm{MUSIC}}^{(i)}(f)
=
\frac{1}{
\mathbf{d}^{H}(f)
\mathbf{U}_{\mathrm{N},i}
\mathbf{U}_{\mathrm{N},i}^{H}
\mathbf{d}(f)
},
\label{music}
\end{equation}
where \(\mathbf{d}(f)\in\mathbb{C}^{W}\) is the Doppler steering
vector. The dominant Doppler frequency is converted to a Doppler
velocity projection as
\begin{equation}
v_r=\lambda f,
\qquad
\lambda=\frac{c}{f_c}.
\label{v_r_music}
\end{equation}
For a CSI-ratio stream, \(v_r\) represents the effective differential path-length rate encoded by the relative phase evolution of the two transmit streams. However, because this differential geometry is absorbed into the unknown stream-specific effective direction, \(v_r\) can be treated directly as a one-dimensional Doppler projection of the underlying motion.

Because the MUSIC spectrum can contain multiple peaks in a given time window, DoRF converts this spectrum to a single scalar Doppler projection by retaining only the dominant peak for each stream and time index. Specifically, for stream \(i\) and window centered at time \(s\),
\begin{equation}
f_i^\star(s)
=
\arg\max_{f\in\mathcal{F}}
P_{\mathrm{MUSIC}}^{(s,i)}(f),
\qquad
v_r(s;i)
=
\lambda f_i^\star(s),
\end{equation}
where \(\mathcal{F}\) is the searched Doppler grid. This procedure is applied independently to each CSI-ratio stream constructed from transmit-antenna pairs that share the same receive antenna, yielding a set of complementary Doppler projections across these antenna pairs. The physical directions associated with these projections do not need to be known in advance, since they are subsequently estimated as unknown effective directions during DoRF construction.

\subsection{Doppler Radiance Fields}

Doppler velocity projections extracted from Wi-Fi CSI are sparse, unordered, and potentially environment-dependent, as the distribution of dominant effective reflections is shaped by the surrounding scene. The underlying multipath geometry determines both the amount of motion information captured by each common-RX, different-TX CSI-ratio stream and the effective direction from which the motion is observed. As a result, the available projections often cover only a limited and non-uniform portion of the angular space. Such partial and scene-specific angular coverage can limit generalization when test conditions involve motion viewpoints that were absent or only weakly represented during training.

For each receive antenna, DoRF is constructed independently from the common-RX, different-TX CSI-ratio streams associated with that antenna. To simplify the notation, the receive-antenna index is omitted in the following derivation, and $i=1,\ldots,N$ indexes the corresponding transmit-antenna pairs.

Each Doppler projection can be interpreted as a one-dimensional observation of hand motion from an unknown effective direction in the environment. Importantly, this projection does not usually correspond to a single physical path. Instead, it is formed by the relative motion-induced phase evolution of two transmit streams, each of which contains the superposition of many unresolved propagation paths and scatterers. Ideally, when the contributing reflections are concentrated around dominant effective directions, the extracted Doppler projection can be approximated as a linear projection of the actual three-dimensional velocity onto an effective differential direction. However, when the reflections and scatterers are spread over multiple directions or form a multimodal pattern, the Doppler projection remains a function of the actual velocity, but it no longer corresponds to a clean linear projection. Instead, it becomes a blurred motion observation, analogous to a motion-blurred frame formed by integrating multiple viewpoints or moving scatterers. Nevertheless, different CSI-ratio streams can still provide complementary motion views. For example, one stream may primarily capture left--right motion, another may emphasize forward--backward motion, and another may contain a mixture of vertical and horizontal components. Unlike optical cameras, these effective viewing directions are not known a priori. They are induced by the transmitter--receiver geometry, antenna configuration, and surrounding reflectors.

This interpretation motivates representing each CSI-ratio stream by a stream-specific effective Doppler projection vector in $\mathbb{R}^3$. Under the standard far-field or small-motion approximation, the Doppler response depends primarily on the effective direction from which motion is observed, while the reliability and strength of that projection can vary across streams because of propagation geometry, multipath visibility, and Doppler-estimation quality. For a CSI-ratio stream, the differential geometry of its two constituent transmit streams is absorbed into a single unknown effective Doppler projection vector and does not need to be recovered or interpreted separately.

Let $\mathbf{r}_i\in\mathbb{R}^3$ denote the effective Doppler projection vector associated with the $i$-th Doppler projection. The corresponding Doppler velocity is modeled as
\begin{equation}
 v_r(s;i)=\mathbf{v}(s)^\top\mathbf{r}_i+e(s,i),
 \label{eq:dorf_projection_model}
\end{equation}
where $\mathbf{v}(s)\in\mathbb{R}^3$ is a latent three-dimensional velocity descriptor representing the aggregate hand motion, and $e(s,i)$ captures measurement noise, unresolved multi-scatterer effects, and modeling residuals. The direction of $\mathbf{r}_i$ represents the effective motion-observation direction, while its norm captures the stream-dependent projection gain induced by propagation geometry, multipath visibility, and Doppler-estimation reliability. When a unit viewing direction is needed for interpretation, we define
\begin{equation}
 \bar{\mathbf r}_i=\frac{\mathbf r_i}{\|\mathbf r_i\|}.
 \label{eq:unit_effective_direction}
\end{equation}

Let $\mathbf{V}_r\in\mathbb{R}^{T\times N}$ denote the Doppler velocity projections over $T$ time steps and $N$ CSI-ratio streams, with
\begin{equation}
 \mathbf{V}_r(s,i)=v_r(s;i),
\end{equation}
where each matrix element $\mathbf{V}_r(s,i)$ corresponds to the Doppler velocity projection of stream $i$ at time step $s$, and $v_r(s;i)$ is obtained using the dominant-peak MUSIC-based Doppler velocity estimation procedure in \eqref{music} and \eqref{v_r_music}. Stacking all time steps and streams yields the rank-three-plus-residual model
\begin{equation}
 \mathbf{V}_r=\mathbf{V}\mathbf{R}+\mathbf{E},
 \label{eq:dorf_low_rank_model}
\end{equation}
where $\mathbf{V}\in\mathbb{R}^{T\times 3}$ contains $\mathbf{v}(s)^\top$ as rows, $\mathbf{R}\in\mathbb{R}^{3\times N}$ contains the effective Doppler projection vectors $\mathbf{r}_i$ as columns, and $\mathbf{E}\in\mathbb{R}^{T\times N}$ collects measurement noise and modeling residuals.

Both the latent velocity matrix $\mathbf{V}$ and the effective projection matrix $\mathbf{R}$ are unknown and must be estimated from the observed Doppler projection matrix $\mathbf{V}_r$. This work formulates their joint estimation as an unconstrained regularized matrix factorization
\begin{equation}
\min_{\mathbf V,\mathbf R}
\frac{1}{2TN}\|\mathbf V_r-\mathbf V\mathbf R\|_F^2
+
\frac{\mu}{2T}\|\mathbf V\|_F^2
+
\frac{\gamma}{2N}\|\mathbf R\|_F^2,
\label{eq:objective_re}
\end{equation}
which is inspired by the shared-representation and unknown-viewpoint decomposition underlying unposed NeRF. In the present setting, this idea leads to a regularized rank-three matrix factorization in which the latent motion $\mathbf{V}$ plays the role of a shared representation that must be consistent across all views, and the unknown columns of $\mathbf{R}$ act as learned effective Doppler projection vectors. Each Doppler measurement $\mathbf{V}_r(s,i)$ is analogous to an observation from a particular view, modeled as the projection of the shared latent motion onto its associated effective vector. The reconstruction term enforces consistency between the observed Doppler projections $\mathbf{V}_r$ and the projections reconstructed from $\mathbf{V}$ and $\mathbf{R}$. The Frobenius regularizers stabilize the factorization under noise and incomplete angular coverage.

The formulation in \eqref{eq:objective_re} is also related to regularized low-rank matrix factorization. This paper employs alternating minimization because it provides explicit estimates of both the latent motion factors and the stream-specific effective projection vectors and can be readily extended to additional constraints or regularization terms. The method updates $\mathbf{V}$ and $\mathbf{R}$ through two regularized least-squares subproblems. For each fixed block, the corresponding subproblem is convex.

After convergence, the fitted matrix $\mathbf{R}$ is retained for reconstruction, while a separate column-normalized matrix is formed for directional interpretation. Specifically,
\begin{equation}
 \bar{\mathbf r}_i=\frac{\mathbf r_i}{\|\mathbf r_i\|},
 \qquad i=1,\ldots,N,
\end{equation}
and
\begin{equation}
 \bar{\mathbf R}=
 [\bar{\mathbf r}_1,\ldots,\bar{\mathbf r}_N].
\end{equation}
The alternating minimization uses the fitted effective-projection matrix $\mathbf{R}$ to compute the prediction and loss. The column-normalized matrix $\bar{\mathbf R}$ is obtained only after convergence and is used for spherical interpretation and visualization. The optimization is evaluated using the regularized mean-squared reconstruction objective.

The recovered velocity sequence $\mathbf{V}\in\mathbb{R}^{T\times 3}$ represents the latent three-dimensional motion in a coordinate system induced by the available Wi-Fi Doppler projections. In Wi-Fi sensing, random multipath propagation does not provide a fixed reference coordinate system that anchors the recovered axes to the physical environment. The scale of $\mathbf{V}$ is also determined jointly with the scale of $\mathbf{R}$ through the regularizers. As a result, $\mathbf{V}$ is interpreted as a latent velocity sequence expressed in an unknown coordinate frame and scale. This orientation and scale may vary across trials, so direct cross-trial comparisons of $\mathbf{V}$ require additional alignment or calibration.

Furthermore, the normalized recovered directions $\{\bar{\mathbf r}_i\}$ inherit the original CSI-ratio observation geometry and are generally distributed non-uniformly over the sphere. In practice, the available CSI-ratio streams and dominant multipath components often cover limited angular regions, with several normalized effective directions concentrated in one or a few clusters and large portions of the sphere weakly sampled or unobserved. Thus, the fitted recovery explains the observed Doppler projections through the effective projection vectors, while the subsequent spherical re-projection provides a standardized full-sphere encoding of the recovered latent motion.

To obtain an ordered spherical representation within the recovered latent coordinate frame, each $\mathbf{v}(s)$ is re-projected onto a fixed equiangular grid of directions on the unit sphere. For an $M\times 2M$ latitude--longitude grid, define
\begin{align}
\theta_m &= \frac{(m+0.5)\pi}{M}, \quad m = 0,1,\dots,M-1, \\
\phi_n &= \frac{(n+0.5)2\pi}{2M}, \quad n = 0,1,\dots,2M-1,
\end{align}
and the corresponding unit vectors
\begin{equation}
\mathbf{d}_{mn}=
\begin{bmatrix}
\sin\theta_m\cos\phi_n\\
\sin\theta_m\sin\phi_n\\
\cos\theta_m
\end{bmatrix}.
\end{equation}
For each time index $s$, compute the radial component
\begin{equation}
\mathbf{P}(s,m,n)=\mathbf{v}(s)^\top\mathbf{d}_{mn},
\end{equation}
forming the Doppler radiance field
\begin{equation}
\mathbf{P}\in\mathbb{R}^{T\times M\times 2M}.
\end{equation}
This re-projection does not introduce new motion information beyond the recovered latent vector. Instead, it expresses the latent motion on a fixed spherical domain, converting the three-dimensional vector sequence into an ordered directional representation suitable for spherical representation learning. The resulting field remains subject to the global rotation or reflection ambiguity of the recovered latent coordinate system.

\begin{algorithm}[!b]
\caption{Alternating Optimization for DoRF Construction}
\label{alg:alt-opt-3d-velocity}
\begin{algorithmic}[1]
\setlength{\abovedisplayskip}{0pt}
\setlength{\belowdisplayskip}{0pt}
\setlength{\abovedisplayshortskip}{0pt}
\setlength{\belowdisplayshortskip}{2pt}
\REQUIRE Doppler projections $\mathbf{V}_r\in\mathbb{R}^{T\times N}$, tolerance $\epsilon$, regularizers $\mu,\gamma$, maximum number of iterations
\STATE Compute a rank-three truncated singular value decomposition $\mathbf{V}_r\approx\mathbf{U}_3\mathbf{\Sigma}_3\mathbf{W}_3^\top$ and initialize
\[
\mathbf{R}\leftarrow\mathbf{\Sigma}_3^{1/2}\mathbf{W}_3^\top
\in\mathbb{R}^{3\times N}.
\]
\STATE Initialize $k\leftarrow0$
\REPEAT
  \STATE $k\leftarrow k+1$
  \STATE \textbf{Velocity update:}
  \[
  \mathbf{V} \leftarrow 
  \mathbf{V}_r \mathbf{R}^\top
  \left(
  \mathbf{R}\mathbf{R}^\top+\mu N\mathbf{I}_3
  \right)^{-1}
  \in\mathbb{R}^{T\times 3}
  \]
  \STATE \textbf{Direction update:}
  \[
  \mathbf{R}\leftarrow
  \left(
  \mathbf{V}^\top \mathbf{V}+\gamma T\mathbf{I}_3
  \right)^{-1}
  \mathbf{V}^\top \mathbf{V}_r
  \in\mathbb{R}^{3\times N}
  \]
  \STATE \textbf{Prediction:}
  \[
  \widehat{\mathbf{V}}_r\leftarrow \mathbf{V} \mathbf{R}
  \]
  \STATE \textbf{Loss:}
  \[
  \mathcal{L}^{(k)}
  =
  \tfrac{1}{2TN}
  \|\widehat{\mathbf{V}}_r-\mathbf{V}_r\|_F^{2}
  +
  \tfrac{\mu}{2T}
  \|\mathbf{V}\|_F^{2}
  +
  \tfrac{\gamma}{2N}
  \|\mathbf{R}\|_F^{2}
  \]
\vspace{0.35em}
\UNTIL{$k>1$ and $\left|\mathcal{L}^{(k)}-\mathcal{L}^{(k-1)}\right|/\mathcal{L}^{(k-1)}<\epsilon$} or maximum iterations

\STATE \textbf{Unit-direction extraction:}
\[
\bar{\mathbf r}_i\leftarrow
\frac{\mathbf r_i}{\|\mathbf r_i\|},
\qquad i=1,\dots,N,
\]
and let
\[
\bar{\mathbf R}
\leftarrow
[\bar{\mathbf r}_1,\dots,\bar{\mathbf r}_N].
\]

\STATE \textbf{DoRF construction:} let $\{\mathbf{d}_{mn}\}_{m=0,n=0}^{M-1,2M-1}$ be a fixed equiangular grid on the unit sphere; compute
\[
\mathbf{P}(s,m,n)\leftarrow \mathbf{v}(s)^\top\mathbf{d}_{mn},
\quad
\mathbf{P}\in\mathbb{R}^{T\times M\times 2M}.
\]
\RETURN $\mathbf{P}\in\mathbb{R}^{T\times M\times 2M}$, $\mathbf{V}\in\mathbb{R}^{T\times 3}$, $\mathbf{R}\in\mathbb{R}^{3\times N}$, $\bar{\mathbf R}\in\mathbb{R}^{3\times N}$
\end{algorithmic}
\end{algorithm}

This field transforms the original sparse, unordered, and non-uniformly distributed Doppler observations into an ordered full-sphere representation, enabling more consistent learning despite variations in the original multipath viewing geometry. Fig.~\ref{figure:Dorf} illustrates the resulting DoRF construction, where sparse Doppler viewpoints are mapped onto a fixed equiangular spherical representation. The overall procedure, including the alternating recovery of the latent velocity sequence and effective projection vectors followed by the spherical re-projection step used to construct the DoRF representation, is summarized in Algorithm~\ref{alg:alt-opt-3d-velocity}. During the alternating updates, the fitted effective-projection matrix $\mathbf{R}$ is used to compute the prediction and loss. After convergence, its column-normalized version $\bar{\mathbf R}$ is returned as the unit effective viewing-direction matrix for interpretation, while the original fitted matrix $\mathbf R$ is retained. The DoRF representation is constructed from the recovered latent velocity sequence $\mathbf{V}$. In this algorithm, the regularization parameters stabilize the two least-squares updates and reduce sensitivity to noisy or ill-conditioned Doppler projections. Specifically, $\mu$ regularizes the velocity update and controls the energy of the recovered latent velocity sequence, while $\gamma$ regularizes the projection-vector update and prevents unstable effective projection estimates.

\begin{figure}[!t]
    \centering
    \subfloat{%
        \includegraphics[width=0.8\linewidth]{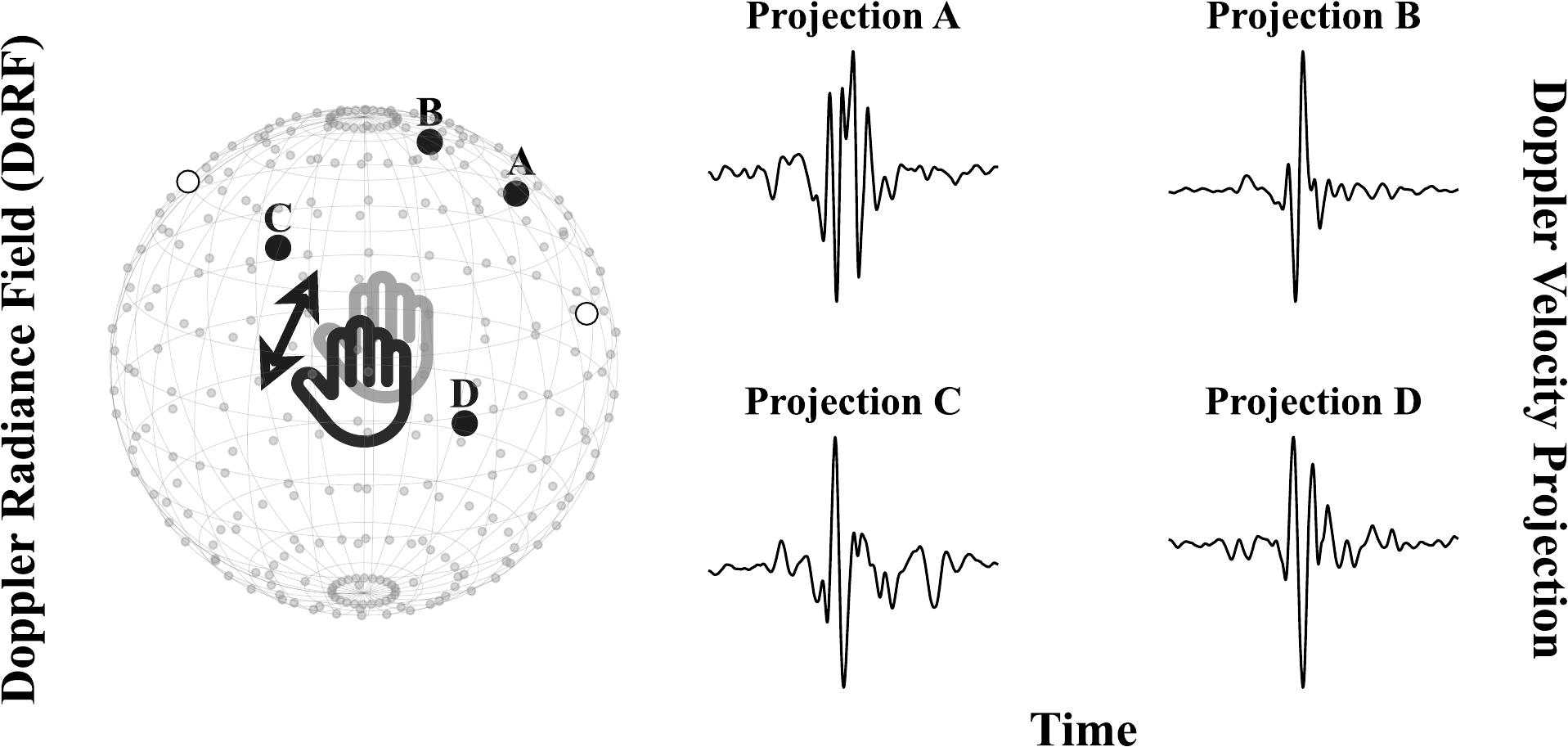}
    }
    
    \caption{Illustration of a DoRF constructed from Wi-Fi CSI. Sphere dots indicate normalized effective observation directions, each yielding a one-dimensional Doppler projection. Four examples capture the same hand motion from different viewpoints. Fixed full-sphere sampling provides an ordered and full-sphere directional encoding of the recovered latent motion.}
\label{figure:Dorf}
\end{figure}

The rest of this section addresses several key theoretical issues in recovering a latent three-dimensional motion representation from Doppler projections. We first formalize the intrinsic orthogonal ambiguity of the recovered factors and then state an ideal unit-direction identifiability result. These results provide the theoretical foundation of DoRF and motivate the use of spherical representation learning that reduces sensitivity to changes in the recovered global orientation. Finally, we establish the convergence of the proposed DoRF algorithm.

\paragraph{Solution equivalence and identifiability}

The reconstruction term in \eqref{eq:objective_re} depends on $\mathbf{V}$ and $\mathbf{R}$ through their product, while the isotropic Frobenius regularizers remain invariant under global orthogonal transformations. Therefore, the factorization retains an intrinsic orthogonal ambiguity. This non-identifiability arises because Wi-Fi multipath measurements do not provide an external 3D coordinate frame. The following proposition formalizes this orthogonal equivalence, showing that equivalent recovered coordinate systems can differ by a global 3D rotation or reflection.

\begin{lemma}[Orthogonal equivalence]
\label{lem:rot_equiv}
Let $\mathbf{Q}\in\mathbb{R}^{3\times 3}$ satisfy
\[
\mathbf{Q}^\top\mathbf{Q}=\mathbf{I}_3.
\]
Then
\begin{equation}
\mathbf{V}\mathbf{R}
=
(\mathbf{V}\mathbf{Q})\bigl(\mathbf{Q}^\top\mathbf{R}\bigr).
\label{eq:orth_equiv}
\end{equation}
Moreover,
\[
\|\mathbf{V}\mathbf{Q}\|_F=\|\mathbf{V}\|_F
\quad\text{and}\quad
\|\mathbf{Q}^\top\mathbf{R}\|_F=\|\mathbf{R}\|_F.
\]
\end{lemma}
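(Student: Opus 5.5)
The plan is a direct verification in two parts. First, the factorization identity \eqref{eq:orth_equiv} follows from associativity of matrix multiplication together with the orthogonality assumption:
\[
(\mathbf{V}\mathbf{Q})(\mathbf{Q}^\top\mathbf{R})=\mathbf{V}(\mathbf{Q}\mathbf{Q}^\top)\mathbf{R}.
\]
The hypothesis is only $\mathbf{Q}^\top\mathbf{Q}=\mathbf{I}_3$, so I also need $\mathbf{Q}\mathbf{Q}^\top=\mathbf{I}_3$. I will obtain it as follows. Since $\mathbf{Q}$ is square, $\mathbf{Q}^\top\mathbf{Q}=\mathbf{I}_3$ gives $\det(\mathbf{Q})^2=1$. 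Hence $\mathbf{Q}$ is invertible with $\mathbf{Q}^{-1}=\mathbf{Q}^\top$, and therefore $\mathbf{Q}\mathbf{Q}^\top=\mathbf{I}_3$. This is the only point that is not purely mechanical, and it is the step I would state explicitly rather than gloss over.

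Second, for the norm identities I will use $\|\mathbf{A}\|_F^2=\operatorname{tr}(\mathbf{A}^\top\mathbf{A})=\operatorname{tr}(\mathbf{A}\mathbf{A}^\top)$. For the velocity factor,
\[
\|\mathbf{V}\mathbf{Q}\|_F^2=\operatorname{tr}\bigl(\mathbf{V}\mathbf{Q}\mathbf{Q}^\top\mathbf{V}^\top\bigr)=\operatorname{tr}(\mathbf{V}\mathbf{V}^\top)=\|\mathbf{V}\|_F^2,
\]
which again uses $\mathbf{Q}\mathbf{Q}^\top=\mathbf{I}_3$. For the projection factor,
\[
\|\mathbf{Q}^\top\mathbf{R}\|_F^2=\operatorname{tr}\bigl(\mathbf{R}^\top\mathbf{Q}\mathbf{Q}^\top\mathbf{R}\bigr)=\operatorname{tr}(\mathbf{R}^\top\mathbf{R})=\|\mathbf{R}\|_F^2.
\]
Taking nonnegative square roots gives both claimed equalities.

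Finally, I will remark on the consequence, although it is not part of the proposition itself. Substituting $(\mathbf{V}\mathbf{Q},\mathbf{Q}^\top\mathbf{R})$ into \eqref{eq:objective_re} leaves each of its three terms unchanged. Every minimizer is therefore determined only up to a global rotation or reflection, which is the ambiguity the paper goes on to discuss.
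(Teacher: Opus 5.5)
Your proof is correct and matches the paper's argument (Appendix A): associativity together with $\mathbf{Q}\mathbf{Q}^\top=\mathbf{I}_3$ gives the product identity, and the trace form of the Frobenius norm gives both invariances. Your explicit derivation of $\mathbf{Q}\mathbf{Q}^\top=\mathbf{I}_3$ from the hypothesis $\mathbf{Q}^\top\mathbf{Q}=\mathbf{I}_3$ fills a step the paper uses without comment.

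One caution concerns only your closing remark, not the proof. The proposition shows that the set of minimizers is closed under $(\mathbf{V},\mathbf{R})\mapsto(\mathbf{V}\mathbf{Q},\mathbf{Q}^\top\mathbf{R})$, so minimizers are at best unique up to an orthogonal transformation. It does not show that any two minimizers are related by such a $\mathbf{Q}$, so ``determined only up to'' overstates what the proposition gives.
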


\begin{proof}
Please refer to Appendix~\hyperref[Appendix:A]{A}.
\end{proof}

The following result characterizes an ideal unit-direction model with sufficiently diverse effective viewing directions. It provides a geometric identifiability reference for the normalized direction model used to interpret the recovered DoRF factors.

\begin{lemma}[Ideal uniqueness of unit viewing directions up to an orthogonal transformation]
\label{lem:unique_mod_rot}
Assume two solutions with $\mathbf{R}_1,\mathbf{R}_2\in\mathbb{R}^{3\times N}$ satisfy
\[
\mathbf{V}_r=\mathbf{V}_1\mathbf{R}_1
\quad
\text{with}\quad
\mathrm{rank}(\mathbf{V}_r)=3,
\]
and
\[
\mathbf{V}_r=\mathbf{V}_2\mathbf{R}_2.
\]
Assume unit-norm columns such that
\[
\|\mathbf{r}_{1,i}\|=\|\mathbf{r}_{2,i}\|=1
\quad
\text{for all } i.
\]
If the set
\[
\left\{\mathbf{r}_{1,i}\mathbf{r}_{1,i}^\top\right\}_{i=1}^{N}
\]
spans the space of symmetric $3\times 3$ matrices, then there exists an orthogonal matrix $\mathbf{Q}$ such that
\[
\mathbf{V}_2=\mathbf{V}_1\mathbf{Q},
\qquad
\mathbf{R}_2=\mathbf{Q}^\top\mathbf{R}_1.
\]
\end{lemma}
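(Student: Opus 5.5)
Since $\mathbf{V}_r$ has rank three and $\mathbf{V}_r=\mathbf{V}_1\mathbf{R}_1=\mathbf{V}_2\mathbf{R}_2$ with all factors of inner dimension three, the plan is to first show that all four factors have full rank three. This holds because $\mathrm{rank}(\mathbf{V}_r)\le\min\{\mathrm{rank}(\mathbf{V}_k),\mathrm{rank}(\mathbf{R}_k)\}\le 3$ forces equality. Consequently, $\mathbf{V}_1$, $\mathbf{V}_2$ have the same column space as $\mathbf{V}_r$, and $\mathbf{R}_1$, $\mathbf{R}_2$ have the same row space as $\mathbf{V}_r$. From this we obtain a unique invertible $\mathbf{A}\in\mathbb{R}^{3\times 3}$ with $\mathbf{V}_2=\mathbf{V}_1\mathbf{A}$, explicitly $\mathbf{A}=(\mathbf{V}_1^\top\mathbf{V}_1)^{-1}\mathbf{V}_1^\top\mathbf{V}_2$. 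Substituting gives $\mathbf{V}_1(\mathbf{R}_1-\mathbf{A}\mathbf{R}_2)=\mathbf{0}$, and since $\mathbf{V}_1$ has full column rank we get $\mathbf{R}_1=\mathbf{A}\mathbf{R}_2$, i.e.\ $\mathbf{R}_2=\mathbf{B}\mathbf{R}_1$ with $\mathbf{B}=\mathbf{A}^{-1}$. This reduces the problem to showing that the general $GL(3)$ ambiguity is cut down to $O(3)$ by the unit-norm constraints.

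Next I use the column normalization. For each $i$ we have $\mathbf{r}_{2,i}=\mathbf{B}\mathbf{r}_{1,i}$, so
\[
1=\|\mathbf{r}_{2,i}\|^2=\mathbf{r}_{1,i}^\top\mathbf{B}^\top\mathbf{B}\,\mathbf{r}_{1,i}=\mathrm{tr}\bigl(\mathbf{B}^\top\mathbf{B}\,\mathbf{r}_{1,i}\mathbf{r}_{1,i}^\top\bigr),
\]
and likewise $1=\mathrm{tr}(\mathbf{I}_3\,\mathbf{r}_{1,i}\mathbf{r}_{1,i}^\top)$. Setting $\mathbf{S}=\mathbf{B}^\top\mathbf{B}-\mathbf{I}_3$, which is symmetric, we obtain $\langle \mathbf{S},\mathbf{r}_{1,i}\mathbf{r}_{1,i}^\top\rangle=0$ for all $i$ under the trace (Frobenius) inner product. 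By the spanning hypothesis, $\mathbf{S}$ is orthogonal to every symmetric $3\times 3$ matrix, in particular to itself, so $\|\mathbf{S}\|_F^2=0$ and $\mathbf{B}^\top\mathbf{B}=\mathbf{I}_3$. This step is where the hypothesis enters essentially, and it is the crux: one must recognize the norm constraints as linear functionals on the symmetric matrix $\mathbf{B}^\top\mathbf{B}$. Note that $N\ge 6$ is implicitly required.

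Finally, set $\mathbf{Q}=\mathbf{B}^\top$. Then $\mathbf{Q}$ is orthogonal, $\mathbf{R}_2=\mathbf{Q}^\top\mathbf{R}_1$, and $\mathbf{V}_2=\mathbf{V}_1\mathbf{A}=\mathbf{V}_1\mathbf{B}^{-1}=\mathbf{V}_1\mathbf{B}^\top=\mathbf{V}_1\mathbf{Q}$. This matches the form in Proposition~\ref{lem:rot_equiv}, which confirms that such pairs are indeed equivalent. The only remaining routine check is the full-rank argument for $\mathbf{V}_2$ and $\mathbf{R}_2$, which follows from the same rank inequality.
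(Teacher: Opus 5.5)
Your proposal is correct and follows the paper's proof. Both reduce the problem to $\mathbf{V}_2=\mathbf{V}_1\mathbf{A}$ with $\mathbf{R}_2=\mathbf{A}^{-1}\mathbf{R}_1$, write the unit-norm constraints as trace conditions on the symmetric matrix $\mathbf{A}^{-\top}\mathbf{A}^{-1}$, and use the spanning hypothesis (pairing the difference from $\mathbf{I}_3$ with itself) to force $\mathbf{A}^{-\top}\mathbf{A}^{-1}=\mathbf{I}_3$. You are slightly more explicit in deriving the relation for $\mathbf{R}_2$ from the full column rank of $\mathbf{V}_1$, and your $\mathbf{Q}=\mathbf{B}^\top$ equals the paper's $\mathbf{Q}=\mathbf{A}$.
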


\begin{proof}
Please refer to Appendix~\hyperref[Appendix:B]{B}.
\end{proof}

Proposition~\ref{lem:rot_equiv} and Proposition~\ref{lem:unique_mod_rot} show that, under the stated ideal exact unit-direction factorization assumptions, the recovered solution is unique up to a global rotation or reflection, provided that the effective viewing directions are sufficiently diverse. This result is an ideal geometric reference and does not establish identifiability of the unconstrained regularized factors obtained from \eqref{eq:objective_re}, whose normalized columns are used only for directional interpretation. In particular, the spanning condition in Proposition~\ref{lem:unique_mod_rot} requires the set of directions to contain enough geometric richness to constrain the latent 3D motion. Since the space of symmetric $3\times 3$ matrices has dimension $6$, at least six sufficiently diverse directions are needed for this spanning condition. This is not a general minimum number of observations required to fit a rank-three factorization. Moreover, the availability of six CSI-ratio streams does not by itself guarantee the spanning condition, because pairwise differential directions may be geometrically or algebraically dependent. If $N<6$, the stated spanning condition cannot be satisfied, and ambiguities beyond a global rotation may remain.

\paragraph{Convergence of DoRF alternating optimization}
Define the unconstrained objective minimized during the alternating updates as
\begin{equation}
F(\mathbf{V},\mathbf{R})
=
\frac{1}{2TN}\|\mathbf{V}_r-\mathbf{V}\mathbf{R}\|_F^2
+\frac{\mu}{2T}\|\mathbf{V}\|_F^2
+\frac{\gamma}{2N}\|\mathbf{R}\|_F^2,
\label{eq:F_def}
\end{equation}
where $\mathbf{R}=[\mathbf{r}_1,\ldots,\mathbf{r}_N]\in\mathbb{R}^{3\times N}$ is not constrained during optimization. The proposed method follows a block coordinate descent (BCD) scheme~\cite{wei2012doa}, where the variables are partitioned into two blocks, $\mathbf{V}$ and $\mathbf{R}$, and updated alternately while keeping the other block fixed.

For fixed $\mathbf{R}$, the $\mathbf{V}$-subproblem is a ridge-regression problem:
\[
\min_{\mathbf{V}}
\frac{1}{2TN}\|\mathbf{V}_r-\mathbf{V}\mathbf{R}\|_F^2
+\frac{\mu}{2T}\|\mathbf{V}\|_F^2 .
\]
When $\mu>0$, this subproblem is strictly convex and admits the unique minimizer
\[
\mathbf{V}
=
\mathbf{V}_r\mathbf{R}^{\top}
\left(
\mathbf{R}\mathbf{R}^{\top}+\mu N\mathbf{I}_3
\right)^{-1}.
\]
Similarly, for fixed $\mathbf{V}$, the $\mathbf{R}$-subproblem is
\[
\min_{\mathbf{R}}
\frac{1}{2TN}\|\mathbf{V}_r-\mathbf{V}\mathbf{R}\|_F^2
+\frac{\gamma}{2N}\|\mathbf{R}\|_F^2 .
\]
When $\gamma>0$, this subproblem is also strictly convex and admits the unique minimizer
\[
\mathbf{R}
=
\left(
\mathbf{V}^{\top}\mathbf{V}+\gamma T\mathbf{I}_3
\right)^{-1}
\mathbf{V}^{\top}\mathbf{V}_r .
\]
Therefore, each block update exactly minimizes $F$ with respect to one block while the other block is fixed. Hence, the objective value is non-increasing across iterations:
\[
F(\mathbf{V}^{(k+1)},\mathbf{R}^{(k)})
\leq
F(\mathbf{V}^{(k)},\mathbf{R}^{(k)}),
\]
and
\[
F(\mathbf{V}^{(k+1)},\mathbf{R}^{(k+1)})
\leq
F(\mathbf{V}^{(k+1)},\mathbf{R}^{(k)}).
\]

As a result, the sequence of objective values is monotone non-increasing. Since $F(\mathbf{V},\mathbf{R})\geq0$, it is lower bounded and therefore convergent. In addition, when $\mu>0$ and $\gamma>0$, the objective is coercive in $(\mathbf V,\mathbf R)$, so its level sets are bounded and the generated sequence admits accumulation points. Although the objective is convex in each block separately, it is not jointly convex because of the bilinear product $\mathbf{V}\mathbf{R}$. Therefore, the algorithm is not guaranteed to converge to a global minimizer. However, under standard assumptions for exact BCD on smooth objectives, every accumulation point of the generated sequence satisfies the first-order stationary conditions of \eqref{eq:F_def}. After convergence, the fitted matrix $\mathbf R$ is retained, and its column-normalized version $\bar{\mathbf R}$ is computed to obtain unit effective viewing directions. This normalization is a post-processing step and is not part of the objective minimization in \eqref{eq:F_def}. The DoRF representation is then constructed by projecting the recovered latent velocity sequence $\mathbf{V}$ onto the fixed equiangular spherical grid. Note that the orthogonal ambiguity characterized in Proposition~\ref{lem:rot_equiv} describes the non-identifiability of equivalent factorizations, but it does not imply that all stationary points are globally equivalent.

\subsection{Spherical Representation Learning over DoRF}

The obtained DoRFs are time-varying signals defined on the sphere \(\mathbb{S}^2\), with each DoRF corresponding to one receive antenna. In this representation, each spherical direction is associated with a temporal projection of the recovered latent velocity, and the full field expresses the latent motion through a fixed set of directional views. The goal of DoRF++ is to use these receive-antenna-specific spherical motion representations as input to an activity classifier while preserving their directional structure.

Let \(A\) denote the number of receive antennas, \(T\) the number of time steps, and \(M\) the number of colatitude samples in the equiangular grid. The multi-DoRF input is represented as
\begin{equation}
\mathbf{P} \in \mathbb{R}^{A \times T \times M \times 2M},
\end{equation}
where \(\mathbf{P}^{(a)} \in \mathbb{R}^{T \times M \times 2M}\) denotes the DoRF associated with the \(a\)-th receive antenna. This structured representation preserves receiver-specific spherical motion patterns while reducing the impact of noisy Doppler projections from one receive antenna on the features derived from the remaining receivers.

Applying conventional planar operations directly to each \(\mathbf{P}^{(a)}\) can distort the intended spherical geometry. Although the DoRF is stored on a two-dimensional latitude--longitude grid, Euclidean processing does not naturally account for longitudinal periodicity, the convergence of longitudes near the poles, or the non-uniform spherical area represented by different grid cells. We therefore employ a spherical Transformer with quadrature-aware attention to account for the spherical sampling measure and directional organization of the representation. By sharing the same operations across spherical locations and aggregating information over the full sphere, this architecture is designed to reduce sensitivity to changes in the recovered global orientation, improving robustness when the effective multipath viewing geometry varies across environments. However, quadrature weighting alone does not guarantee exact rotation equivariance; therefore, the resulting robustness to changes in orientation remains approximate.

Let \(\{\mathbf{d}_j\}_{j=1}^{G}\) be the fixed equiangular grid on \(\mathbb{S}^2\) used to construct each DoRF, with \(G=2M^2\) for an \(M\times 2M\) latitude--longitude grid. For the \(a\)-th receive antenna, the DoRF time series at direction \(\mathbf{d}_j\) is denoted by
\begin{equation}
\mathbf{p}^{(a,j)}
\triangleq
\bigl[
\mathbf{P}^{(a)}(0,\mathbf{d}_j),
\mathbf{P}^{(a)}(1,\mathbf{d}_j),
\dots,
\mathbf{P}^{(a)}(T{-}1,\mathbf{d}_j)
\bigr]^\top
\in \mathbb{R}^{T}.
\end{equation}
The objective is to map the collection
\[
\left\{
\mathbf{p}^{(a,j)}
:
a=1,\ldots,A,\;
j=1,\ldots,G
\right\}
\]
to an activity label while respecting the spherical geometry of the direction index \(j\) and preserving the complementary information provided by different receive antennas. Although the directional samples are not independent, since they are all projections of the same recovered latent velocity sequence, their
organization on the sphere provides a structured directional
representation. This structure allows the classifier to learn local
and long-range relationships among motion projections along different
spherical directions.

\subsubsection{Per-direction temporal feature extraction}

Each \(\mathbf{p}^{(a,j)}\) is a one-dimensional Doppler time series. We extract a fixed, high-dimensional temporal feature vector from each direction using the Random Convolutional Kernel Transform~\cite{dempster2020rocket}. A bank of \(K_{\rm ker}\) random 1D convolution kernels is generated once using a fixed random seed and then kept fixed. The same kernel bank is shared across all spherical directions, receive antennas, and samples. Each kernel uses a randomly selected length from \(\{7,9,11
\}\), weights sampled from a normal distribution, and a dilation factor chosen as a power of two, allowing it to capture temporal patterns across multiple scales.

We adopt random convolutional kernels rather than a learnable CNN because the fixed kernels provide strong temporal features without requiring gradient-based optimization of the convolutional weights. This approach captures a rich and diverse set of multi-scale Doppler patterns in a single forward pass, provides low computational cost, and reduces the risk of overfitting on relatively small HAR datasets commonly encountered in wireless sensing.

Applying this kernel bank to \(\mathbf{p}^{(a,j)}\) produces multiple feature maps. From the resulting responses, we compute the maximum activation and one or more proportions of positive values (PPV), obtained by shifting the responses with the configured bias terms and measuring the fraction of samples above zero. The resulting feature vector is
\begin{equation}
\mathbf{f}^{(a,j)}
=
\bigl(
f^{(a,j)}_{1},
f^{(a,j)}_{2},
\dots,
f^{(a,j)}_{D}
\bigr)
\in\mathbb{R}^{D},
\end{equation}
where \(D\) denotes the resulting output feature dimension, which is determined by the number of kernels and summary statistics. The same random convolutional feature extractor is applied to all directions and all receive-specific DoRFs, ensuring that all extracted features lie in a common temporal feature space.

\subsubsection{Spherical tokenization and quadrature weights}

For each receive antenna, every direction is treated as a token located at \(\mathbf{d}_j\in\mathbb{S}^2\), yielding a receiver-specific spherical feature signal
\begin{equation}
\mathbf{F}^{(a)}:\mathbb{S}^2\rightarrow\mathbb{R}^{D},
\qquad
\mathbf{F}^{(a)}(\mathbf{d}_j)=\mathbf{f}^{(a,j)}.
\end{equation}
On an equiangular latitude--longitude grid, the sampling density is not uniform with respect to spherical area. Hence, a positive quadrature weight \(\omega_j\) is associated with each token according to the area represented by that grid cell. Using midpoint quadrature and letting \(\theta_j\in(0,\pi)\) denote the colatitude of \(\mathbf{d}_j\), the normalized weights are
\begin{equation}
\omega_j
=
\frac{\sin\theta_j}
{\sum_{m=1}^{G}\sin\theta_m}.
\label{eq:spherical_quadrature_weights}
\end{equation}
These weights allow the discrete attention operation to approximate continuous integration with respect to the spherical area measure.

Before applying attention, the feature vector at each token is projected to the model dimension \(d_{\rm model}\):
\begin{equation}
\mathbf{z}_{a,j}^{(0)}
=
\mathbf{W}_{\rm in}\,\mathbf{f}^{(a,j)}
+
\mathbf{b}_{\rm in}
\in\mathbb{R}^{d_{\rm model}},
\qquad j=1,\dots,G,
\end{equation}
where \(\mathbf{W}_{\rm in}\in\mathbb{R}^{d_{\rm model}\times D}\) is a learnable linear projection and \(\mathbf{b}_{\rm in}\in\mathbb{R}^{d_{\rm model}}\) is a bias vector. Unlike standard text positional encodings, which assume a one-dimensional sequence order, DoRF++ tokens represent directions on \(\mathbb{S}^2\). Therefore, rather than assigning an absolute positional encoding to each direction, DoRF++ represents spherical position through a relative positional bias between pairs of directions during attention. This encoding supplies the attention layers with a smooth representation of the relative geometry between spherical directions, allowing them to exploit their spherical organization instead of treating them as unordered tokens while avoiding dependence on a particular absolute coordinate frame.

\subsubsection{Quadrature-aware spherical self-attention}

We apply spherical attention layers independently to each receive-specific DoRF while sharing the same attention parameters across receive antennas. For a single head and receive antenna \(a\), the queries, keys, and values are first computed by linear projections:
\begin{equation}
\mathbf{q}_{a,i}
=
\mathbf{W}_Q\mathbf{z}^{(\ell)}_{a,i},
\qquad
\mathbf{k}_{a,j}
=
\mathbf{W}_K\mathbf{z}^{(\ell)}_{a,j},
\qquad
\boldsymbol{\nu}_{a,j}
=
\mathbf{W}_V\mathbf{z}^{(\ell)}_{a,j},
\end{equation}
where \(\mathbf{W}_Q,\mathbf{W}_K,\mathbf{W}_V\in\mathbb{R}^{d_h\times d_{\rm model}}\) are learnable weight matrices and \(d_h\) denotes the head dimension. Layer and head indices are omitted from these quantities for readability.

Quadrature-aware spherical attention extends the idea of conventional Transformers to signals defined on \(\mathbb{S}^2\). The queries, keys, and values are still obtained through linear projections, but attention is computed over spherical directions. In addition to the content-based similarity between the query and key features, the attention score incorporates the relative spherical geometry between their corresponding directions. For directions \(\mathbf{d}_i\) and \(\mathbf{d}_j\), we first define their relative angular similarity as
\begin{equation}
c_{ij}
=
\mathbf{d}_i^\top\mathbf{d}_j,
\end{equation}
and represent it using Legendre polynomial features
\begin{equation}
\mathbf{r}_{ij}
=
\left[
P_0(c_{ij}),
P_1(c_{ij}),
\ldots,
P_{L_{\rm pos}}(c_{ij})
\right]^\top,
\end{equation}
where \(P_l(\cdot)\) denotes the Legendre polynomial of degree \(l\) and \(L_{\rm pos}\) controls the maximum degree used for the relative positional representation. A learnable scalar relative positional bias is then obtained as
\begin{equation}
b_{ij}^{\rm rel}
=
\mathrm{MLP}_{\rm rel}
\left(
\mathbf{r}_{ij}
\right).
\end{equation}
Since \(c_{ij}\) depends only on the inner product between two spherical directions, it is unchanged under any common orthogonal transformation \(\mathbf{Q}\in O(3)\):
\begin{equation}
(\mathbf{Q}\mathbf{d}_i)^\top
(\mathbf{Q}\mathbf{d}_j)
=
\mathbf{d}_i^\top
\mathbf{Q}^\top\mathbf{Q}
\mathbf{d}_j
=
\mathbf{d}_i^\top\mathbf{d}_j.
\end{equation}
Consequently, the relative positional bias is invariant to a global rotation or reflection of the spherical coordinate system, directly reducing sensitivity to the orthogonal ambiguity of the recovered DoRF representation.

Crucially, each key--value contribution is also weighted by the quadrature weight \(\omega_j\) of its direction. This modification makes the discrete attention operation a quadrature approximation of a continuous integral over the sphere. Consequently, the attention mechanism accounts for the non-uniform spherical area represented by different grid points while enabling interactions among directional tokens over the full sphere. The attention output at token \(i\) is given by
\begin{align}
\widetilde{\mathbf{z}}^{(\ell)}_{a,i}
&=
\sum_{j=1}^{G}
a^{(a)}_{ij}\,\boldsymbol{\nu}_{a,j},
\\
a^{(a)}_{ij}
&=
\frac{
\exp\!\Bigl(
\mathbf{q}_{a,i}^\top\mathbf{k}_{a,j}/\sqrt{d_h}
+
b_{ij}^{\rm rel}
\Bigr)\,\omega_j
}{
\sum_{m=1}^{G}
\exp\!\Bigl(
\mathbf{q}_{a,i}^\top\mathbf{k}_{a,m}/\sqrt{d_h}
+
b_{im}^{\rm rel}
\Bigr)\,\omega_m
}.
\label{eq:s2_attention_discrete}
\end{align}
Here, \(\widetilde{\mathbf{z}}^{(\ell)}_{a,i}\in\mathbb{R}^{d_h}\), and the quadrature weights \(\omega_j\) ensure that the attention operation approximates integration with respect to the spherical measure rather than treating all sampled directions as equally representative. The relative positional bias depends only on the pairwise angular relationship between directions and therefore remains unchanged under a common global rotation or reflection. The quadrature weighting corrects for the sampling measure, while the finite spherical discretization means that the resulting discrete attention operation is not, by itself, guaranteed to be exactly rotation-equivariant.

Multi-head attention concatenates the outputs of several heads and projects the result back to dimension \(d_{\rm model}\). Each spherical Transformer block follows a pre-norm residual design:
\begin{align}
\mathbf{u}^{(\ell)}_{a,i}
&=
\mathbf{z}^{(\ell)}_{a,i}
+
\left[
\mathrm{MHA}_{\mathbb{S}^2}
\!\left(
\{\mathrm{Norm}(\mathbf{z}^{(\ell)}_{a,j})\}_{j=1}^{G}
\right)
\right]_i,
\\
\mathbf{z}^{(\ell+1)}_{a,i}
&=
\mathbf{u}^{(\ell)}_{a,i}
+
\mathrm{MLP}
\!\left(
\mathrm{Norm}(\mathbf{u}^{(\ell)}_{a,i})
\right),
\qquad
\ell=0,\dots,L{-}1,
\end{align}
where \(\mathrm{MHA}_{\mathbb{S}^2}\) denotes the global quadrature-aware spherical multi-head attention operation defined in~\eqref{eq:s2_attention_discrete}. The same spherical Transformer is applied to each receive-specific DoRF, enabling the model to learn a shared spherical representation while preserving receiver-specific information.

\begin{figure*}[!t]
\centering
\subfloat{%
\includegraphics[width=0.99\linewidth]{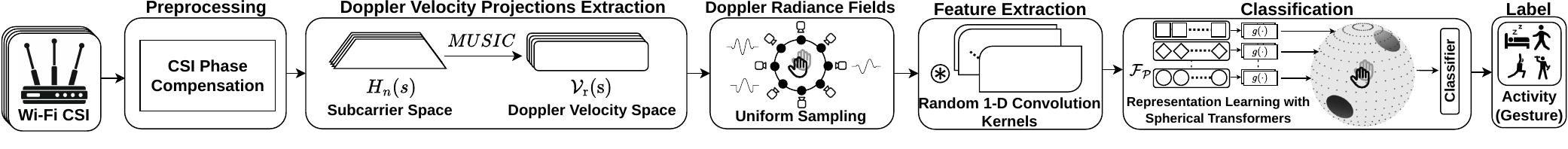}
}
\caption{The proposed DoRF++ framework for Wi-Fi-based HAR. CSI signals are first transformed into Doppler velocity projections that capture motion through multiple sparse effective directions. These projections are then used to recover one latent motion representation for each receive antenna, which is re-projected onto a fixed equiangular spherical grid to construct a receiver-specific DoRF. A shared temporal feature extractor, spherical Transformer blocks, receiver-level fusion, and a classifier are subsequently used to recognize the performed activity while reducing sensitivity to variations in the recovered spherical orientation.}
\label{figure:method}
\end{figure*}

\subsubsection{Pooling, multi-DoRF fusion, and classification}

After \(L\) spherical attention blocks, each receive-specific DoRF is aggregated into a sphere-level descriptor. For receive antenna \(a\), we compute
\begin{equation}
\mathbf{g}^{(a)}
=
\operatorname{Pool}_{\mathbb{S}^2}
\left(
\{\mathbf{z}^{(L)}_{a,j}\}_{j=1}^{G}
\right)
\in\mathbb{R}^{d_{\rm model}},
\end{equation}
where \(\operatorname{Pool}_{\mathbb{S}^2}\) summarizes the learned directional responses over the sphere. In this work, a max-type spherical pooling operation is used to retain the most discriminative directional responses while reducing sensitivity to their exact recovered locations on the sphere. This pooling operation does not provide a formal guarantee of rotation or reflection invariance, but it reduces the dependence of the receiver-specific descriptor on any single spherical token.

The receiver-specific descriptors are then fused into a single activity representation using an element-wise max operation over the receive-antenna dimension:
\begin{equation}
\mathbf{g}
=
\operatorname*{MaxPool}_{a=1,\ldots,A}
\left\{
\mathbf{g}^{(a)}
\right\}
\in\mathbb{R}^{d_{\rm model}},
\label{eq:antenna_max_pooling}
\end{equation}
or, equivalently,
\begin{equation}
g_k
=
\max_{a=1,\ldots,A}
g_k^{(a)},
\qquad
k=1,\ldots,d_{\rm model}.
\end{equation}
Since the same spherical Transformer and pooling operation are applied to all receive-specific DoRFs, this symmetric fusion is invariant to permutations of the receive-antenna order. It retains the strongest learned response for each feature dimension across the available antennas, allowing complementary motion information captured under different propagation conditions to contribute to the final activity representation without depending on antenna indexing. Although the identity of the antenna producing each maximum response is discarded, receiver-specific information is preserved during the preceding spherical representation-learning stage.

Finally, a classifier head \(h(\cdot)\) maps \(\mathbf{g}\) to class logits, and softmax yields class probabilities:
\begin{equation}
\widehat{\mathbf{y}}
=
\operatorname{softmax}\!\bigl(h(\mathbf{g})\bigr)
\in\mathbb{R}^{C},
\end{equation}
where \(C\) is the number of activity classes. This spherical representation-learning stage preserves the directional organization of each DoRF while enabling long-range interactions over the sphere and combining complementary receiver-specific spherical representations through permutation-invariant antenna pooling. As a result, DoRF++ provides a spherical-geometry-aware and antenna-order-invariant activity embedding designed to reduce sensitivity to variations in the recovered orientation and multipath geometry, making it suitable for generalization across users and sensing conditions. Fig.~\ref{figure:method} summarizes the complete DoRF++ pipeline.

\section{Experiment}\label{section:experiment}

\subsection{Data}

\begin{figure}[!b]
    \centering
    \subfloat{%
        \includegraphics[width=0.8\linewidth]{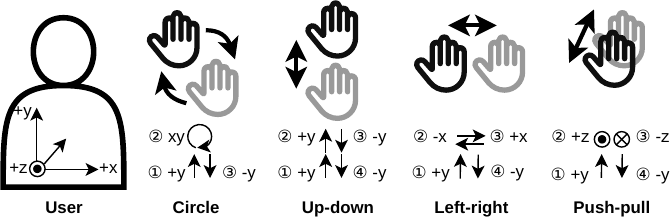}
    }
    \caption{Illustration of the four right-hand gestures included in the
    UTHAMO-5G dataset: \textit{circle}, \textit{up--down},
    \textit{left--right}, and \textit{push--pull}.}
    \label{figure:gestures}
\end{figure}

To evaluate the proposed method, we collected a challenging hand-motion
dataset, referred to as UTHAMO-5G. Unlike conventional Wi-Fi-based HAR
datasets that often focus on large-scale activities such as walking,
sitting, or standing, UTHAMO-5G is designed around small-scale
right-hand gestures that produce subtle wireless-channel variations.
The dataset includes CSI recordings from four gestures:
\textit{circle}, \textit{left--right}, \textit{up--down}, and
\textit{push--pull}, performed by ten adult participants in a fixed
indoor office environment of size
\(11\,\mathrm{m}\times5.6\,\mathrm{m}\). All participants provided
informed consent. The gestures are illustrated in
Fig.~\ref{figure:gestures}.

The selected gestures were intentionally chosen to be challenging for
Wi-Fi-based sensing. In particular, several gesture pairs exhibit
similarity under rotations or permutations of the spatial \(x\), \(y\),
and \(z\) axes. For example, the \textit{left--right} and
\textit{push--pull} motions can appear similar under a change in the
effective observation direction or axis alignment. Similarly, a
\textit{push--pull} motion in the sagittal direction may produce
Doppler projections resembling those of \textit{left--right} or
\textit{up--down} motions performed in other planes. Consequently,
some gestures can produce similar one-dimensional Doppler patterns
when observed from particular effective directions. This design makes
the dataset particularly suitable for evaluating whether Wi-Fi-based
HAR methods can distinguish fine-grained hand motions whose
projections may be ambiguous from certain viewpoints.

Data were collected using a single body orientation, with each
participant facing the transmitter while performing the gestures, as
shown in Fig.~\ref{figure:map}. The experimental setup consisted of
four ASUS RT-AC86U Wi-Fi access points, each equipped with three
external antennas and one internal antenna. One access point was
configured as the transmitter, while the remaining three operated as
passive sniffers serving as receivers. The transmitter operated in the
\(5\,\mathrm{GHz}\) band with an \(80\,\mathrm{MHz}\) bandwidth on
channel 157 and transmitted packets at an average rate of
\(147\) packets per second. The resulting CSI measurements contained
\(256\) subcarriers for each recorded antenna stream.

\begin{figure}[!b]
        \centering		\includegraphics[width=0.85\linewidth]{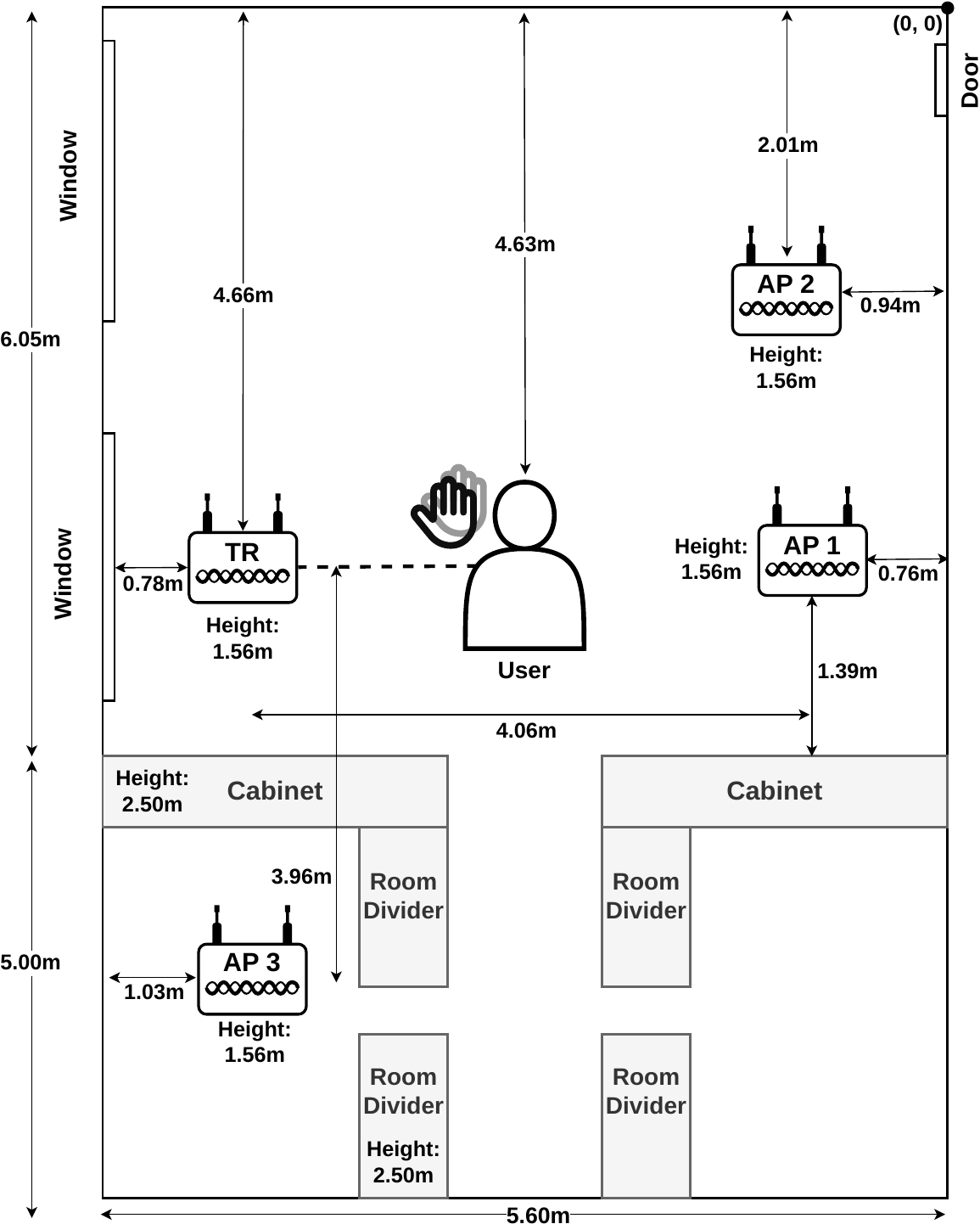}
	
	\caption{The detailed floor plan of the UTHAMO-5G data collection setup.}
	\vspace{-0mm}
\label{figure:map}
\end{figure}

The three receiver access points were positioned to provide different
propagation and motion-observation conditions. AP1 was placed such
that the participant was located near the middle of the line-of-sight
path between AP1 and the transmitter, allowing the hand motion to
directly perturb the dominant LOS component. AP2 had unobstructed
propagation paths toward both the transmitter LOS region and the
participant. In contrast, AP3 was placed behind cabinets and did not
have a direct propagation path toward either the participant or the
transmitter LOS region. These placements were selected to introduce
diverse multipath conditions and different levels of motion visibility
across receivers. The locations and heights of the transmitter,
receivers, and participant are shown in Fig.~\ref{figure:map}.

CSI was collected using the UbiLocate framework. In addition, we
developed custom data-collection software to synchronize packets
received by the different access points and to guide participants in
performing the instructed gesture at the appropriate time. Each
participant completed two separate recording sessions. During each
session, the participant performed \(10\) trials of each of the four
gestures, resulting in \(80\)
trials per participant and \(800\) trials in total. The dataset is
class-balanced, with \(200\) trials for each gesture.

For each trial, participants performed the instructed gesture within a
six-second activity window. To reduce boundary effects associated with
gesture initiation and termination, the first and last \(0.5\) seconds
were removed. The remaining five-second segment was then resampled in
the Doppler-projection domain onto a uniform \(100\,\mathrm{Hz}\)
temporal grid, yielding \(500\) temporal samples per trial.

\begin{table*}[!t]
\centering
\caption{Four-class cross-user hand-motion generalization accuracy (\%). Results are reported as mean $\pm$ standard deviation.}
\label{table:results}
\begin{adjustbox}{width=0.98\textwidth}
\begin{tabular}{lll|ccc|c}
\toprule
\textbf{Method} & \textbf{Input Signal} & \textbf{Classifier}
  & \rev{\textbf{Single Receiver AP $1$}} 
  & \rev{\textbf{Single Receiver AP $2$}} 
  & \rev{\textbf{Single Receiver AP $3$}} 
  & \textbf{Average} \\
\midrule
AMAP~\cite{salehinejad2023joint} & Raw CSI Magnitude & Logistic Regression     
  & 31.8\%\,$\pm$\,3.1\%  
  & 28.2\%\,$\pm$\,2.8\%  
  & 26.7\%\,$\pm$\,2.6\%  
  & 28.9\%\,$\pm$\,2.8\% \\
\midrule
CMAP~\cite{salehinejad2023joint} & Raw CSI Magnitude & Logistic Regression        
  & 30.6\%\,$\pm$\,3.0\%  
  & 27.3\%\,$\pm$\,2.7\%  
  & 25.5\%\,$\pm$\,2.5\%  
  & 27.8\%\,$\pm$\,2.7\% \\
\midrule
CapsHAR~\cite{djogoHAR} & CSI Magnitude & Capsule Neural Network
  & 33.2\%\,$\pm$\,4.3\%  
  & 29.6\%\,$\pm$\,3.7\%  
  & 27.5\%\,$\pm$\,3.4\%  
  & 30.1\%\,$\pm$\,3.8\% \\
\midrule
CSI Ratio Model~\cite{wu2022wifi} & CSI Ratio Phase & Logistic Regression      
  & 35.8\%\,$\pm$\,4.5\%  
  & 31.9\%\,$\pm$\,4.1\%  
  & 29.5\%\,$\pm$\,3.8\%  
  & 32.4\%\,$\pm$\,4.1\% \\
\midrule
APNSS + APSC~\cite{102859317} & Doppler Velocity & Logistic Regression
  & 44.2\%\,$\pm$\,5.9\%  
  & 39.1\%\,$\pm$\,5.3\%  
  & 36.1\%\,$\pm$\,4.8\%  
  & 39.8\%\,$\pm$\,5.3\% \\
\midrule
MORIC~\cite{hasanzadeh2025moric} & \rev{Delay-Separated Doppler Velocities} & MORIC Classifier      
  & 66.4\%\,$\pm$\,7.8\%  
  & 55.9\%\,$\pm$\,7.1\% 
  & 42.4\%\,$\pm$\,6.8\%
  & 54.9\%\,$\pm$\,7.2\% \\
\midrule
Ours (DoRF) & Doppler Radiance Field & MORIC Classifier      
  & \textbf{76.5\%\,$\pm$\,8.6\%}  
  & \textbf{60.2\%\,$\pm$\,7.2\%} 
  & \textbf{46.9\%\,$\pm$\,10.7\%}
  & \textbf{61.2\%\,$\pm$\,8.8\%} \\
\midrule
Ours (DoRF++) & DoRF & Spherical Attention Networks
  & \textbf{80.4\%\,$\pm$\,8.4\%}  
  & \textbf{66.7\%\,$\pm$\,6.0\%} 
  & \textbf{54.2\%\,$\pm$\,8.6\%}
  & \textbf{67.1\%\,$\pm$\,7.6\%} \\
\bottomrule
\end{tabular}
\end{adjustbox}
\end{table*}

\subsection{Training and Test Procedure}

DoRF++ was implemented in PyTorch using the
\textit{torch-harmonics} library\footnote{\url{https://github.com/NVIDIA/torch-harmonics}},
with \textit{AttentionS2}~\cite{bonev2025attention} used as the quadrature-aware spherical
attention module. The model contains \(L=4\) spherical attention
blocks, each with \(N_h=2\) attention heads, and uses a model dimension
of \(d_{\rm model}=256\). The relative spherical positional bias uses
Legendre polynomial features up to degree \(L_{\rm pos}=5\), followed
by a relative-position MLP with hidden dimensions \(128\) and activation
function \(ReLU\). A separate relative-position MLP is learned for each attention head in each spherical attention block. The DoRF
is constructed on an \(M\times2M\) equiangular spherical grid with
\(M=6\).

For Doppler-projection extraction, the common-RX CSI-ratio streams are
processed using MUSIC, with the number of dominant Doppler components
set to one. The MUSIC pseudo-spectrum is evaluated over a Doppler
frequency range of \(-32~\mathrm{Hz}\) to \(32~\mathrm{Hz}\) using a
uniform grid of \(513\) frequency points, corresponding to a frequency
resolution of \(0.125~\mathrm{Hz}\). Each CSI-ratio stream is divided into overlapping
temporal windows of \(W=32\) samples with a stride of one sample. The
dominant Doppler frequency estimated within each window is then converted
to a Doppler velocity projection using the carrier wavelength. For DoRF
construction, the regularization parameters are set to
\(\mu=0.1\) and \(\gamma=0.01\), the convergence tolerance is set to
\(\epsilon=10^{-6}\), and the maximum number of alternating-optimization
iterations is set to \(10\).

For per-direction temporal feature extraction, a bank of
\(K_{\rm ker}=1{,}000\) random convolutional kernels was used through
the \textit{sktime} library. Each kernel uses a length selected from
\(\{7,9,11\}\), weights sampled from a normal distribution, and a
dilation selected as a power of two.  The resulting
feature vectors were projected into the \(256\)-dimensional model
space before being processed by the spherical attention blocks. The
model was trained using cross-entropy loss with label smoothing of
\(0.1\) and the Adam optimizer~\cite{kingma2014adam}. The learning
rate was set to \(1\times10^{-6}\), the batch size to \(64\), and the
maximum number of training epochs to \(2500\). Early stopping with a
patience of \(200\) epochs was applied, and the checkpoint achieving
the lowest validation loss was retained for final evaluation.

Both the transmitter and each receiver AP are equipped with four
antennas, yielding \(4\times4=16\) raw TX--RX CSI streams for each
receiver AP. For each receive antenna, we form CSI ratios between all
unordered pairs of transmit antennas that share that receive antenna. Since four
transmit antennas produce
\(\binom{4}{2}=6\) unordered pairs, each receive antenna yields six
CSI-ratio streams and, consequently, six Doppler projections.

A separate DoRF is constructed for each of the four receive antennas.
Thus, for a given receiver AP, the \(24\) ratio-derived Doppler
projections are organized into four receive-antenna-specific groups,
with each DoRF recovered independently from the six projections
associated with one receive antenna. Although the pairwise ratio
projections are not all statistically or algebraically independent,
they provide complementary observations of the underlying motion.
This receive-antenna-specific construction preserves the common
observation geometry within each group and prevents noisy projections
from one receive antenna from directly affecting the DoRF recovery
performed for the remaining receive antennas. The four resulting
DoRFs are subsequently processed using shared spherical Transformer
parameters and fused through permutation-invariant max pooling over
the receive-antenna dimension.

To evaluate cross-user generalization, we used a ten-fold
leave-one-subject-out (LOSO) cross-validation protocol. In each fold,
all data from one participant were held out exclusively for testing.
Among the remaining nine participants, one participant was used for
validation, while the other eight participants were used for training.
Therefore, the training, validation, and test subsets were
subject-disjoint, and no samples from the held-out test participant
were used for model selection, early stopping, or hyperparameter
tuning. This procedure was repeated until each of the ten participants
had served once as the test subject. Within each fold, the checkpoint
with the lowest validation loss was selected and evaluated on the
held-out test participant. Final performance was obtained by
aggregating the test results across the ten LOSO folds.

\subsection{Results}

\subsubsection{Model Generalization to Unseen Users}

Table~\ref{table:results} compares the performance of the proposed DoRF++ method with existing Wi-Fi-based HAR approaches on the UTHAMO-5G dataset. The evaluation is conducted in the \rev{single receiver AP setting}, where each receiver AP is used independently for activity recognition. Since the three APs observe the hand motion under different propagation conditions, their results reflect the effect of AP placement and motion visibility on cross-user generalization.

\rev{For four-class classification, DoRF++ achieves the highest accuracy among all evaluated methods in Table~\ref{table:results}. Using AP1, DoRF++ and DoRF achieve generalization accuracies of $80.4\%$ and $76.5\%$, respectively, compared with $66.4\%$ for MORIC. Averaged across the three single receiver AP settings, DoRF++ reaches $67.1\%$, outperforming DoRF at $61.2\%$ and MORIC at $54.9\%$. These results demonstrate that explicitly organizing Doppler projections into a radiance-field representation and processing them with spherical attention improves generalization to unseen users.} Table~\ref{table:avg_confusion} reports the average confusion matrix for DoRF++ using AP1. The class-wise accuracy ranges from $72.49\%$ for \textit{push--pull} to $90.53\%$ for \textit{circle}. The strongest performance is obtained for \textit{circle}, which produces more distinctive Doppler variations with respect to the AP1--transmitter geometry because it involves motion components along both horizontal and vertical axes. In contrast, \textit{left--right} and \textit{push--pull} are more frequently confused, since their Doppler projections can become similar from certain propagation viewpoints.

Traditional methods that directly use the magnitude or phase of raw CSI perform poorly in the generalization setting. AMAP and CMAP~\cite{salehinejad2023joint} achieve average accuracies of only $28.9\%$ and $27.8\%$, respectively, indicating that although raw CSI magnitude contains activity-related variations, it remains highly sensitive to user-dependent effects, environmental noise, and AP-specific propagation changes. CapsHAR~\cite{djogoHAR} slightly improves the average accuracy to $30.1\%$ by using a more advanced architecture, but its performance remains close to chance level for four-class recognition. The CSI ratio model~\cite{wu2022wifi}, which mitigates phase distortions such as STO and SFO, further improves the average accuracy to $32.4\%$. This suggests that CSI phase contains useful motion-related information; however, the resulting representation is still not sufficiently robust for reliable cross-user recognition in this fine-grained hand-motion setting. Methods based on Doppler velocity provide stronger generalization performance than raw CSI-based methods. APNSS$\,+\,$APSC~\cite{102859317}, which selects informative antenna pairs and extracts Doppler velocity, achieves an average accuracy of $39.8\%$.

\begin{table}[!t]
\centering
\caption{Average confusion matrix (\%) across ten subjects for DoRF++ using AP1.}
\begin{tabular}{c|cccc}
\hline
 & \textbf{Circle} & \textbf{Left--right} & \textbf{Up--down} & \textbf{Push--pull} \\
\hline
\textbf{Circle}      & \textbf{90.53} & 6.84 & 1.58 & 1.05 \\
\textbf{Left--right} & 1.05 & \textbf{76.32} & 7.89 & 14.74 \\
\textbf{Up--down}    & 3.68 & 8.30 & \textbf{82.26} & 5.76 \\
\textbf{Push--pull}  & 0.53 & 17.51 & 9.47 & \textbf{72.49} \\
\hline
\end{tabular}
\label{table:avg_confusion}
\end{table}

\begin{figure}[!b]
	\centering
	\subfloat{%
		\includegraphics[width=0.80\linewidth]{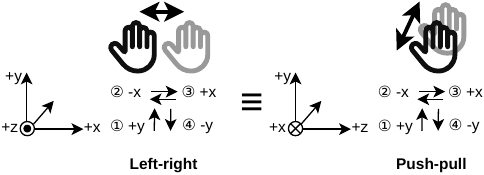}
	}
	\caption{Left--right and push--pull gestures can become equivalent under a rotation of the coordinate system, making them difficult to distinguish in Wi-Fi sensing.}
\label{figure:equivalence}
\end{figure}

\subsubsection{Impact of Access Point Location on Performance}

The results in Table~\ref{table:results} show that AP1 consistently provides the best recognition performance, followed by AP2 and AP3. This trend is consistent with the data-collection geometry shown in Fig.~\ref{figure:map}. AP1 was positioned such that the participant was located near the middle of the line-of-sight path between AP1 and the transmitter. Therefore, the hand motion directly perturbs the dominant propagation path, producing stronger Doppler variations and more discriminative motion-dependent CSI changes. This explains why DoRF++ achieves its highest \rev{single receiver AP} accuracy of $80.4\%$ using AP1.

AP2 also had a clear view of both the transmitter line-of-sight path and the participant, but the participant was less directly centered on the transmitter--receiver path. As a result, the motion-induced changes are still visible but less dominant than in AP1, leading to a lower DoRF++ accuracy of $66.7\%$. In contrast, AP3 was placed behind cabinets and did not directly observe either the participant or the transmitter LOS path. Consequently, AP3 mainly captures indirect and attenuated multipath reflections. Although these NLOS components still contain motion information, they are weaker, noisier, and more sensitive to environmental scattering, resulting in the lowest DoRF++ accuracy of $54.2\%$ among the three APs.

This AP-dependent behavior confirms that the geometric relationship among the transmitter, receiver AP, participant, and surrounding reflectors still has a major impact on Wi-Fi-based HAR. When the user motion directly affects a strong propagation path, as in AP1, Doppler projections become more informative and easier to organize into a stable DoRF representation. When the AP observes the motion only through weaker multipath components, as in AP3, the extracted projections become more ambiguous, which reduces generalization performance.

\subsubsection{Two-Class Gesture Recognition}

\begin{table}[!t]
\centering
\caption{DoRF++ binary hand-motion recognition generalization accuracy using AP1.}
\resizebox{\columnwidth}{!}{%
\begin{tabular}{c|cccc}
  \hline
  & \textbf{Circle} & \textbf{Left--right} & \textbf{Up--down} & \textbf{Push--pull} \\
  \hline
  \textbf{Circle}      & \cellcolor{gray!50} & $91.9\%\pm10.1\%$ & $92.3\%\pm7.9\%$ & $95.9\%\pm6.7\%$ \\
  \textbf{Left--right}  & \cellcolor{gray!50} & \cellcolor{gray!50} & $86.5\%\pm8.1
  \%$ & $75.2\%\pm11.4\%$ \\
  \textbf{Up--down}     & \cellcolor{gray!50} & \cellcolor{gray!50} & \cellcolor{gray!50} & $87.7\%\pm11.6\%$ \\
  \textbf{Push--pull}   & \cellcolor{gray!50} & \cellcolor{gray!50} & \cellcolor{gray!50} & \cellcolor{gray!50} \\
  \hline
\end{tabular}
}
\label{table:confusion_matrix}
\vspace{-4mm}
\end{table}

The four gestures in UTHAMO-5G were deliberately selected as fine-grained hand motions with geometric similarities across the spatial $x$, $y$, and $z$ axes. This makes the dataset challenging for Wi-Fi-based sensing because a receiver observes only Doppler projections of the underlying spatial motion, rather than the full 3-D trajectory. Therefore, two gestures that are distinct in the physical coordinate system may appear similar from a particular wireless propagation viewpoint.

To further analyze this behavior, a binary hand-motion classification experiment is conducted using DoRF++ with AP1. Table~\ref{table:confusion_matrix} shows that DoRF++ distinguishes most gesture pairs with high generalization accuracy. \rev{Pairs involving the \textit{circle} gesture are generally easier to distinguish, possibly because the circular motion simultaneously contains substantial components along both the \(x\)- and \(y\)-axes. In contrast, \textit{up--down} motion is primarily confined to the \(y\)-axis, while \textit{left--right} and \textit{push--pull} involve coupled motion components along the \(x\)-\(y\) and \(y\)-\(z\) axes, respectively. These partially shared directional components may make the corresponding gesture pairs more difficult to separate.} This trend suggests that gesture pairs with more distinct motion-direction profiles produce more separable Doppler projection patterns under the AP1 geometry.

In contrast, the pair \textit{left--right vs. push--pull} remains the most challenging, with an accuracy of \rev{$75.2\%\pm11.4\%$}. This is expected because as shown in Fig.~\ref{figure:equivalence}, both gestures involve back-and-forth motion along a single axis, and their Doppler signatures can become similar when projected onto certain multipath directions. In particular, a \textit{push--pull} motion along the sagittal direction can resemble a \textit{left--right} motion if the effective propagation path observes the hand from a rotated or oblique viewpoint. This ambiguity also explains the off-diagonal confusion between these two classes. As shown in Table~\ref{table:avg_confusion}, the \textit{circle} gesture also exhibits moderate confusion with \textit{left--right}, as participants may trace horizontally elongated ellipses rather than perfectly symmetric circles, which might cause portions of the motion to resemble a lateral back-and-forth gesture.

\subsubsection{Ablation Study}

Table~\ref{table:ablation_dorfpp} presents an ablation study evaluating the contribution of the main components of the DoRF++ pipeline. The complete DoRF++ model achieves the highest AP1 cross-user generalization accuracy of $80.4\%\pm8.4\%$, demonstrating the effectiveness of combining the DoRF representation with spherical positional encoding and spherical attention.

\begin{table}[!b]
\centering
\caption{Ablation study of DoRF++ generalization accuracy using AP1.}
\def\arraystretch{1.0}%
\begin{tabular}{lcc}
\hline
\textbf{Configuration} & \textbf{Accuracy (\%)} & \textbf{SD (\%)} \\
\hline
w/o CSI Phase Compensation & 31.2\% & 3.6\% \\
Conventional same-TX/diff.-RX CSI ratio & 66.9\% & 8.9\% \\

w/o Spherical Position Encoding & 74.1\% & 6.7\% \\

\revised{Doppler End-to-end LSTM w/o DoRF}  &  \revised{63.2\%} & \revised{5.9\%} \\
\revised{Doppler End-to-end CNN w/o DoRF}  &  \revised{69.4\%} & \revised{7.6\%} \\
PCA (3 Components) + MORIC Classifier & 60.9\% & 4.9\% \\
PCA (3 Components) + MLP Classifier & 62.1\% & 6.1\% \\
Direct Latent $\mathbf{V}$ Representation & 68.2\% & 7.6\% \\
Simple MLP Classifier on DoRF & 73.4\% & 7.9\% \\
\hline
\textbf{Full DoRF++} & \textbf{80.4\%} & \textbf{8.4\%} \\
\hline
\end{tabular}
\label{table:ablation_dorfpp}
\end{table}

Removing CSI phase compensation causes the largest performance degradation, reducing the accuracy to $31.2\%\pm3.6\%$. This result confirms that phase compensation is essential for reliable Doppler extraction, as uncorrected STO, SFO, and hardware-induced phase distortions can obscure the subtle phase variations associated with hand motion.
Using the conventional same-TX/different-RX CSI-ratio construction instead of the proposed common-RX CSI ratio, while keeping all other components and settings of the DoRF++ pipeline unchanged, achieves $66.9\%\pm8.9\%$. This reduction indicates that forming ratios between streams sharing the same receive antenna provides more reliable phase sanitization by better suppressing receiver-side synchronization errors and RF-chain-dependent phase distortions.
Removing spherical positional encoding decreases the accuracy to $74.1\%\pm6.7\%$. This reduction indicates that explicitly encoding the spatial arrangement of the Doppler projections on the sphere provides useful geometric information for distinguishing motion patterns observed from different directions.
Applying a simple MLP classifier to features extracted from the DoRF representation using random convolutional kernels achieves $73.4\%\pm7.9\%$. Although this result demonstrates that DoRF contains discriminative motion information, its lower accuracy relative to the complete model highlights the benefit of spherical attention for capturing relationships among Doppler projections distributed across the sphere.

To assess whether the DoRF representation itself provides an advantage over direct sequence modeling, we compare it with direct Doppler-sequence LSTM and CNN baselines. The LSTM classifier processes the Doppler projections as multivariate time-series inputs and jointly performs feature extraction and classification using a two-layer LSTM with a hidden size of \(256\), achieving an accuracy of $63.2\%\pm5.9\%$. The CNN classifier instead represents the Doppler projections in a two-dimensional projection--time format and applies three convolutional blocks with \(3\times3\) kernels and channel dimensions \(1\!\rightarrow\!32\!\rightarrow\!64\!\rightarrow\!128\), achieving $69.4\%\pm7.6\%$.

We further compare DoRF with a conventional dimensionality-reduction approach by replacing the DoRF recovery stage with PCA and retaining the first three principal components. The three-dimensional PCA representation is processed using the same random convolutional feature extraction scheme. Using the MORIC classifier achieves $60.9\%\pm4.9\%$, while replacing the MORIC classifier with an MLP using comparable hyperparameters to the other MLP-based experiments achieves $62.1\%\pm6.1\%$. Both PCA-based configurations substantially underperform the corresponding DoRF-based representations, indicating that the improvement obtained by DoRF is not simply due to reducing the Doppler projections to a three-dimensional representation. Instead, explicitly recovering a latent motion representation consistent with the directional Doppler projection model provides more discriminative features for cross-user activity recognition.

Both Doppler end-to-end baselines underperform the simple MLP applied to the DoRF representation. To further isolate the contribution of the spherical DoRF representation, we directly use the recovered latent velocity sequence \(\mathbf{V}\) for classification with the MORIC classifier, which achieves $68.2\%\pm7.6\%$. Its lower accuracy compared with the DoRF-based classifier indicates that re-projecting the recovered latent motion onto the sphere provides a more effective structured representation for activity recognition than using the latent velocity sequence directly. Together with the direct Doppler-sequence and PCA baselines, these results suggest that the structured latent-motion recovery provided by DoRF offers an advantage over both direct sequence modeling and generic dimensionality reduction, while the subsequent spherical re-projection provides an additional structured representation that further improves cross-user recognition. The further improvement from $73.4\%$ to $80.4\%$ indicates that spherical positional encoding and spherical attention enable DoRF++ to exploit the geometric relationships among directional motion projections represented over the sphere more effectively.
\section{Conclusion}\label{section:conclusion}
This work introduces DoRF and DoRF++, two complementary frameworks for Wi-Fi-based HAR. DoRF represents human motion in a latent three-dimensional velocity space by interpreting Doppler velocity projections extracted from CSI as sparse observations of an underlying 3-D motion sequence. Inspired by NeRF, which reconstructs coherent 3-D structure from multiple views, DoRF projects the recovered motion onto uniformly sampled directions on the unit sphere. This process produces a spherical and spatially homogeneous multi-view representation that captures motion dynamics while reducing sensitivity to variations in multipath observation geometry.

Building on this representation, DoRF++ combines DoRF with attention-based spherical Transformers for activity classification. Experiments on a challenging hand-gesture dataset demonstrate that DoRF++ generalizes effectively to unseen users and significantly outperforms state-of-the-art Wi-Fi-based HAR methods, particularly for difficult gesture pairs and in the practical single-receiver-AP setting. These results demonstrate the potential of the proposed spherical Doppler representation learning framework, paving the way for robust and generalizable real-world Wi-Fi sensing.

\phantomsection
\section*{Appendix A\\
PROOF OF PROPOSITION~\ref{lem:rot_equiv}}
\label{Appendix:A}
\addcontentsline{toc}{section}{Appendix A: PROOF OF PROPOSITION~\ref{lem:rot_equiv}}
By associativity,
\[
(\mathbf{V}\mathbf{Q})(\mathbf{Q}^\top\mathbf{R})
=
\mathbf{V}(\mathbf{Q}\mathbf{Q}^\top)\mathbf{R}
=
\mathbf{V}\mathbf{R}.
\]
By orthogonality,
\[
\|\mathbf{V}\mathbf{Q}\|_F^2
=
\mathrm{tr}\!\left(\mathbf{Q}^\top\mathbf{V}^\top\mathbf{V}\mathbf{Q}\right)
=
\mathrm{tr}\!\left(\mathbf{V}^\top\mathbf{V}\right)
=
\|\mathbf{V}\|_F^2,
\]
and similarly,
\[
\|\mathbf{Q}^\top\mathbf{R}\|_F^2
=
\mathrm{tr}\!\left(\mathbf{R}^\top\mathbf{Q}\mathbf{Q}^\top\mathbf{R}\right)
=
\mathrm{tr}\!\left(\mathbf{R}^\top\mathbf{R}\right)
=
\|\mathbf{R}\|_F^2.
\]
Taking square roots completes the proof.

\phantomsection
\section*{Appendix B\\
PROOF OF PROPOSITION~\ref{lem:unique_mod_rot}}
\label{Appendix:B}
\addcontentsline{toc}{section}{Appendix A: PROOF OF PROPOSITION~\ref{lem:unique_mod_rot}}
Since $\mathbf{V}_r$ has rank 3,
both $\mathbf{V}_1$ and $\mathbf{V}_2$ have full column rank.
Therefore, there exists an invertible matrix
$
\mathbf{A}\in\mathbb{R}^{3\times 3}
$
such that
\[
\mathbf{V}_2=\mathbf{V}_1\mathbf{A},
\qquad
\rev{\mathbf{R}_2=\mathbf{A}^{-1}\mathbf{R}_1}.
\]
Since $\mathbf{V}_1$ has full column rank, one valid explicit choice for $\mathbf{A}$ is
\[
\mathbf{A}=(\mathbf{V}_1^\top \mathbf{V}_1)^{-1}\mathbf{V}_1^\top \mathbf{V}_2.
\]
Unit-norm constraints give
\[
1=\|\mathbf{r}_{2,i}\|^2
=
\rev{\|\mathbf{A}^{-1}\mathbf{r}_{1,i}\|^2}
=
\mathbf{r}_{1,i}^\top\mathbf{M}\,\mathbf{r}_{1,i},
\]
where
\[
\rev{\mathbf{M}\triangleq \mathbf{A}^{-\top}\mathbf{A}^{-1}}.
\]
Thus,
\[
\mathrm{tr}\!\left(\mathbf{M}\,\mathbf{r}_{1,i}\mathbf{r}_{1,i}^\top\right)=1
\quad
\text{for all } i.
\]
By the spanning assumption, these constraints determine $\mathbf{M}$ uniquely.
To see this explicitly, suppose $\mathbf{M}$ and $\mathbf{M}'$ are symmetric matrices that both satisfy
\[
\mathrm{tr}\!\left(\mathbf{M}\,\mathbf{r}_{1,i}\mathbf{r}_{1,i}^\top\right)=1
\quad\text{and}\quad
\mathrm{tr}\!\left(\mathbf{M}'\,\mathbf{r}_{1,i}\mathbf{r}_{1,i}^\top\right)=1
\quad
\text{for all } i.
\]
Subtracting the two sets of equalities yields
\[
\mathrm{tr}\!\left((\mathbf{M}-\mathbf{M}')\,\mathbf{r}_{1,i}\mathbf{r}_{1,i}^\top\right)=0
\quad
\text{for all } i.
\]
Let $\mathbf{D}\triangleq \mathbf{M}-\mathbf{M}'\in\mathbb{R}^{3\times 3}$, which is symmetric. Since
$\{\mathbf{r}_{1,i}\mathbf{r}_{1,i}^\top\}_{i=1}^N$ spans the space of symmetric $3\times 3$ matrices,
any symmetric matrix $\mathbf{S}$ can be written as $\mathbf{S}=\sum_i \alpha_i\,\mathbf{r}_{1,i}\mathbf{r}_{1,i}^\top$,
and by linearity of the trace,
\[
\mathrm{tr}(\mathbf{D}\mathbf{S})
=
\sum_i \alpha_i\,\mathrm{tr}\!\left(\mathbf{D}\,\mathbf{r}_{1,i}\mathbf{r}_{1,i}^\top\right)
=
0.
\]
Choosing $\mathbf{S}=\mathbf{D}$ gives
\[
\mathrm{tr}(\mathbf{D}^2)=\|\mathbf{D}\|_F^2=0,
\]
hence $\mathbf{D}=\mathbf{0}$ and therefore $\mathbf{M}=\mathbf{M}'$. This proves uniqueness of $\mathbf{M}$.
Since $\mathbf{M}=\mathbf{I}_3$ satisfies the constraints (because $\|\mathbf{r}_{1,i}\|=1$),
the unique feasible solution is $\mathbf{M}=\mathbf{I}_3$.
Hence,
\rev{$
\mathbf{A}^{-\top}\mathbf{A}^{-1}=\mathbf{I}_3,
$}
so
$
\rev{\mathbf{A}\mathbf{A}^\top=\mathbf{I}_3},
$
and
$
\mathbf{A}
$
is orthogonal.
Setting
$
\mathbf{Q}=\mathbf{A}
$
\rev{gives \(\mathbf{V}_2=\mathbf{V}_1\mathbf{Q}\) and \(\mathbf{R}_2=\mathbf{Q}^\top\mathbf{R}_1\)}, completing the proof.

\bibliographystyle{IEEEtran}

\bibliography{refs}




\begin{IEEEbiography}[{\includegraphics[width=1in,height=1.25in,clip,keepaspectratio]{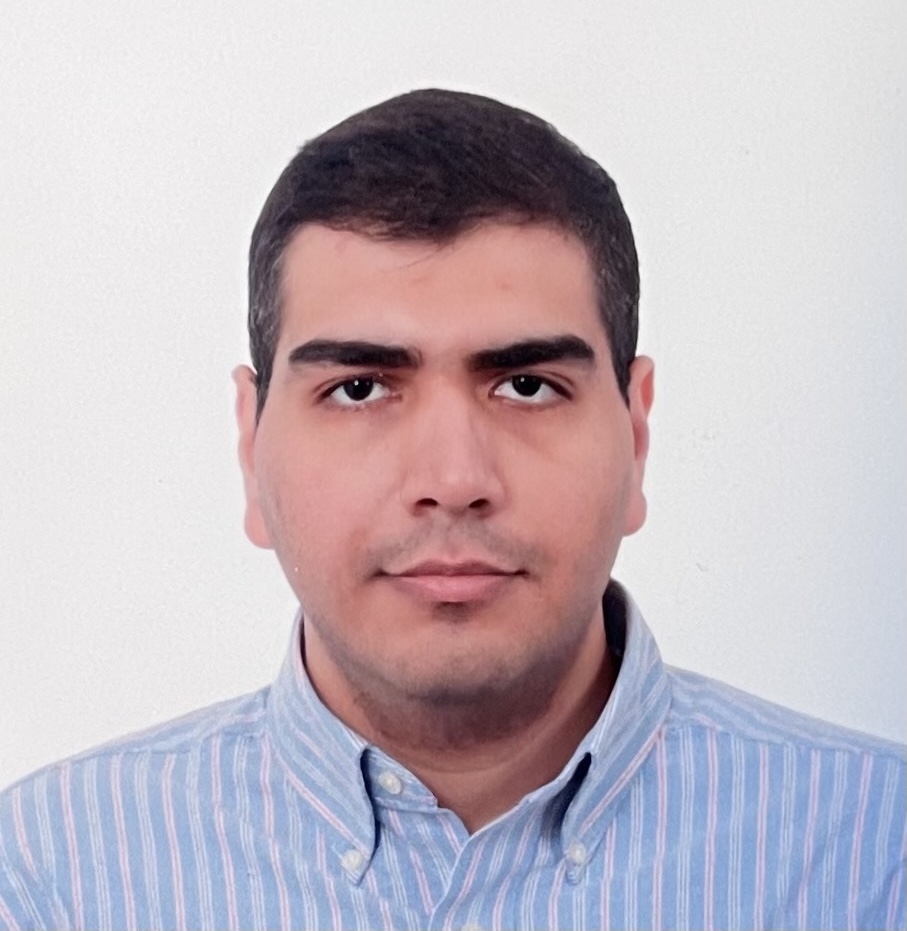}}]{Navid Hasanzadeh}
is currently a Ph.D. candidate in the Edward S. Rogers Sr. Department of Electrical and Computer Engineering at the University of Toronto, Toronto, ON, Canada. His research interests primarily include wireless sensing and machine learning for healthcare. He leverages signal processing techniques and machine learning to drive advancements in Wi-Fi-based human activity recognition, digital healthcare, and wearable health monitoring, with a focus on enhancing the generalization capabilities of machine learning models to ensure reliable performance across diverse scenarios and real-world deployments.
\end{IEEEbiography}


%
\begin{IEEEbiography}[{\includegraphics[width=0.9in,height=1.25in,clip,keepaspectratio]{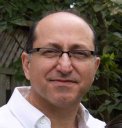}}]{Shahrokh Valaee} \revised{is a Professor with the Edward S. Rogers Sr. Department of Electrical and Computer Engineering, University of Toronto, and the holder of the Nortel Chair of Network Architectures and Services. He is the Founder and the Director of the Wireless Innovation Research Laboratory (WIRLab) at the University of Toronto. 

Professor Valaee was the TPC Co-Chair and the Local Organization Chair of the IEEE Personal Mobile Indoor Radio Communication (PIMRC) Symposium 2011, the TPC Co-Chair of ICT 2015, PIMRC 2017, and the Track Co-Chair of WCNC 2014, PIMRC 2020, and VTC Fall 2020. He was the co-chair of the organizing committee for PIMRC 2023. He is currently a member of the Steering Committee of IEEE PIMRC. From December 2010 to December 2012, he was the Associate Editor of the IEEE Signal Processing Letters. From 2010 to 2015, he served as an Editor of IEEE Transactions on Wireless Communications. Currently, he is an Editor of IEEE Transactions on Wireless Communications and an Associate Editor of the Journal of Computer and System Sciences. From 2021 to 2023, he was a Distinguished Lecturer of the IEEE Communications Society. Currently, he serves as a Distinguished Lecturer for the IEEE Vehicular Technology Society. He was the co-recipient of the best paper award in the IEEE Machine Learning for Signal Processing (MLSP) 2020 workshop. Professor Valaee is a Fellow of the Engineering Institute of Canada and a Fellow of IEEE. }
\end{IEEEbiography}

\vfill

\end{document}